\documentclass[12pt]{article}
\usepackage[utf8]{inputenc}
\usepackage{fullpage}
\usepackage{times}
\usepackage[normalem]{ulem}
\usepackage{fancyhdr,graphicx,amsmath,amssymb, mathtools, scrextend, titlesec, enumitem}
\usepackage{bm}
\usepackage{tcolorbox}
\usepackage{amsthm}
\usepackage{mathrsfs}  

\usepackage{amsfonts}

\usepackage[hypertexnames=false]{hyperref}
\hypersetup{
    colorlinks = true,
    linkbordercolor = {white},
    linkcolor = {blue!50!black},
citecolor     = {blue!50!black},
}
\usepackage[nameinlink,capitalize]{cleveref}
\crefname{algocf}{Algorithm}{Algorithms}

\numberwithin{equation}{section}
\newtheorem{theorem}{Theorem}[section]
\newtheorem{definition}[theorem]{Definition}
\newtheorem{problem}[theorem]{Problem}

\newtheorem{corollary}[theorem]{Corollary}
\newtheorem{remark}[theorem]{Remark}
\newtheorem{lemma}[theorem]{Lemma}

\newtheorem{assumption}[theorem]{Assumption}

\usepackage{caption}
\usepackage{subcaption}

\usepackage{algpseudocode}
\usepackage{tikz}

\usepackage{easy-todo} 

\usepackage{chngcntr}
\counterwithin{figure}{section}

\usepackage{ulem}

\usepackage{yhmath}

\makeatletter
\@namedef{subjclassname@2020}{%
  \textup{2020} Mathematics Subject Classification}
\makeatother

\usepackage{enumitem}
\usepackage{arxiv}
\usepackage[utf8]{inputenc}
\usepackage{ stmaryrd }

\usepackage[ruled]{algorithm2e}
\usepackage{bm}

\usepackage{svg}

\newcommand*{\obs}{\ensuremath{\varphi}}
\newcommand*{\bfobs}{\ensuremath{\boldsymbol{\varphi}}}


\newcommand{\scpr}[1]{\left\langle #1 \right\rangle}

\newcommand{\spa}{\text{span}}





\renewcommand*{\phi}{\varphi}
\renewcommand*{\rho}{\varrho}

\renewcommand*{\subset}{\subseteq}





\newcommand*{\dd}{\ensuremath{\mathrm{d}}}
\newcommand*{\dx}[1]{\ensuremath{\,\dd{#1}}}





\DeclareMathOperator{\Id}{Id}

\DeclareMathOperator{\supp}{supp}



\newcommand*{\abs}[2][{}]{\ensuremath{#1\left\lvert#2#1\right\rvert}}
\newcommand*{\norm}[2][{}]{\ensuremath{#1\left\Vert#2#1\right\Vert}}

\newcommand*{\chl}{\ensuremath{C_{V_h}}}
\newcommand*{\lb}{\ensuremath{\mathfrak{c}}}
\newcommand*{\ub}{\ensuremath{\mathfrak{C}}}

\newcommand*{\bfkappa}{{\ensuremath{\boldsymbol{\kappa}}}}

\newcommand*{\bfu}{\ensuremath{\mathbf{u}}}
\newcommand*{\bfx}{\ensuremath{\mathbf{x}}}

\newcommand*{\bfy}{\ensuremath{\mathbf{y}}}
\newcommand*{\bfv}{\ensuremath{\mathbf{v}}}
\newcommand*{\bfw}{\ensuremath{\mathbf{w}}}

\newcommand*{\bff}{\ensuremath{\mathbf{f}}}

\newcommand*{\bfr}{\ensuremath{\mathbf{r}}}
\newcommand*{\bfe}{\ensuremath{\mathbf{e}}}

\newcommand*{\bfups}{{\ensuremath{\boldsymbol{\Upsilon}}}}



















\newcommand{\vcmrm}{\ensuremath{\mathrm{VCMR}_{k,k_0,\ell}^m}}








\newcommand*{\fesp}{\ensuremath{V_h}}



\newcommand*{\T}{\ensuremath{\mathcal{T}}}
\newcommand*{\N}{\ensuremath{\mathcal{N}}}
\newcommand*{\Rl}{\ensuremath{R_\ell^{\max}}}









\usepackage{graphicx} 
\usepackage{nicefrac}
\usepackage{array}
\usepackage{booktabs}

\title{Multi-level Neural Networks for high-dimensional parametric obstacle problems}

\author{Martin Eigel$^1$, Cosmas Hei\ss$^2$, Janina E. Sch\"utte$^1$ \\
  $^1$ Weierstraß Insitute of Applied Analyis and Stochastics\\
  $^2$ École Polytechnique Fédérale de Lausanne\\}

\begin{document}

\maketitle

\begin{abstract}
    A new method to solve computationally challenging (random) parametric obstacle problems is developed and analyzed, where the parameters can influence the related partial differential equation (PDE) and determine the position and surface structure of the obstacle.
    As governing equation, a stationary elliptic diffusion problem is assumed.
    The high-dimensional solution of the obstacle problem is approximated by a specifically constructed convolutional neural network (CNN).
    This novel algorithm is inspired by 
    a finite element constrained multigrid algorithm 
    to represent the parameter to solution map.
    This has two benefits: First, it allows for efficient practical computations since multi-level data is used as an explicit output of the NN thanks to an appropriate data preprocessing.
    This improves the efficacy of the training process and subsequently leads to small errors in the natural energy norm.
    Second, the comparison of the CNN to a multigrid algorithm provides means to carry out a complete a priori convergence and complexity analysis of the proposed NN architecture.
    Numerical experiments illustrate a state-of-the-art performance for this challenging problem. 
\end{abstract}

\section{Introduction}
Free boundary problems arise in different research and engineering areas.
They constitute solutions of a PDE with a priori unknown boundaries.
Well-known examples include classes of obstacle problems and variational inequalities. 
The solution of a classical obstacle problem describes the position of an elastic membrane as a function $u$, which is fixed on the boundary of the domain $D$, always lies above some known obstacle $\obs$ and is under the influence of some forcing $f$, see~\cite{rosoton2017obstacleproblemsfreeboundaries}.
The interface where the membrane touches the obstacle is a priori unknown.
On the part of the domain where the membrane hangs freely, the position function fulfills a stationary diffusion equation.
Fixing $u$ on the boundary, the problem considered here has the form
\begin{equation}\label{equation: VI}
    \begin{cases}
        u(x, \bfy) \geq \obs(x, \bfy) &\text{for all } x\in D\\
        -\nabla \cdot (\kappa(x,\bfy) \nabla u(x,\bfy)) = f(x,\bfy) &\text{for all } x \text{ such that } u(x,\bfy) > \obs(x,\bfy)\\
        -\nabla \cdot (\kappa(x,\bfy) \nabla u(x,\bfy)) \geq f(x,\bfy) &\text{for all } x\in D
    \end{cases},
\end{equation}
depending on some coefficient $\kappa$ and some countably infinite dimensional parameter vector $\bfy\in\Gamma\subset\mathbb{R}^\mathbb{N}$. The dependence on $\kappa$ has for instance also been considered in~\cite{ForsterKornhuber,kornhuber2014}. 
Obstacle problems are found in a variety of applications, namely the Stefan problem describing the process of ice melting in water, can be rewritten in form of the obstacle problem~\cite{duvaut}.
Furthermore, the obstacle problem finds applications in financial mathematics~\cite{BLANCHET20061362, BLANCHET2006371, laurence} and the minimizer of interaction energies can be written in terms of the solution of an obstacle problem~\cite{carillo}, which are encountered e.g. in in physics in particle behavior \cite{Holm2005AggregationOF, HOLM2006183} or in biology in collective behavior of animals~\cite{Bernoff, D'Orsogna, Fetecau}.
For more information on practical uses of the obstacle problem, we refer to~\cite{friedman, borjan,rosoton2017obstacleproblemsfreeboundaries} and references therein.

With the gaining popularity of scientific machine learning (SciML), neural networks have been applied to solve obstacle problems in different ways, where the equations can be incorporated into a loss function. The obstacle condition can either be encoded in the neural network architecture, as e.g. in the first approach in~\cite{Zhao_2022} or in~\cite{bahja2024physicsinformedneuralnetworkframework}, or it can be enforced by penelization, as e.g. in the second approach in~\cite{Zhao_2022} or in~\cite{CHENG2023103864}.
In \cite{alphonse2024neuralnetworkapproachlearning}, the variational formulation of the obstacle problem is rewritten in a min-max formulation that is then used for training. 
 
In the work presented here the coefficient and the obstacle depend on some high-dimensional stochastic parameter vector.
In this setting the obstacle problem has to be solved for a large number of realizations of the vector. Classical mathematical solvers, as for example presented in \cite{solverforobstacle} apply to a large class of coefficients. We exploit and inherent distribution of the parameters to develop more efficient tuned surrogate models  
mapping realizations of $\kappa$ and $\obs$ to solutions of the obstacle problem.

A surrogate model mapping $\obs$ to the solution has been derived in~\cite{schwab}.
The proximal neural network architecture with activation functions enforcing the obstacle condition was analyzed in the more general setting of variational inequalities. The convergence achieved in~\cite[Theorem 4.2]{schwab} based on a fixed $\kappa$ and $\obs$ is comparable to the convergence achieved in the present work for variable $\kappa$ and $\obs$ as the analysis is based on an iterative scheme to approximate the solution in both cases. The architecture is implemented for the obstacle problem with variable obstacles~\cite[Example 4.4, Section 6.3]{schwab}, where $\obs$ can be mapped to the solution of the problem. The architecture in our work is analyzed and implemented for a variable obstacle and an additional variable coefficient and forcing.

Here, a multi-level decomposition of the solutions is utilized to implement individual networks approximating a coarse solution and fine grid corrections.
As in this decomposition corrections on fine grids are only of small values, only a low accuracy is needed on fine grids with many parameters. This can be made use of by implementing comparably small NN architectures on high levels in terms of either the number of trainable parameters or number of samples on fine grids.
Stochastic properties of the parametric obstacle problem are computed and analyzed based on multi-level decompositions of finite element spaces in~\cite{kornhuber2014} for fixed obstacles and in~\cite{Bierig2015ConvergenceAO} for stochastic obstacles, based on adaptive finite element methods in~\cite{kornhuber2018}.

The considered architecture inspired by the CNN constructed in~\cite{JMLR:v24:23-0421} for parametric partial differential equations is analyzed in terms of expressivity specifically for the obstacle problem, i.e. with respect to the needed number of trainable parameters to achieve a required accuracy. We prove that CNNs can approximate a projected Richardson iteration leading to bounds on the number of parameters only depending logarithmically on the required accuracy. Furthermore, we prove that a multigrid algorithm based on the projected Richardson iteration and a monotone restriction operator can be approximated by the applied architecture. 
This shows that the surrogate model is at least as expressive as the multigrid solver.

The CNN architecture is tested for different parameter dimensions and for constant and variable obstacles and elasticities.

Main contributions:
\begin{itemize}
    \item A CNN architecture mapping the coefficient, obstacle, and force to the solution of the obstacle problem is presented.
    \item The architecture is analyzed in terms of expressivity. To the best of the authors knowledge, the achieved theoretical convergence results have so far only been derived for obstacle-to-solution maps for other architectures. The main result is presented in~\cref{theorem: full CNN from richardson}. Assuming that the coefficient is uniformly bounded from below and above there exists a constant $C>0$ such that for any $\varepsilon>0$ there exists a CNN $\Psi$ with the number of parameters bounded by $\# \Psi \leq C \log\left({\varepsilon^{-1}}\right)$ such that and for all $\bfy\in\Gamma$ parameterizing the coefficient, force and obstacle it holds 
    \begin{align*}
        \norm{\Psi (\bfkappa,\bff, \bfobs) - \bfu(\cdot,\bfy)}_{H^1} \leq C \left(\norm{f}_{\ast} + \norm{\bfobs}_{H^1}\right)\varepsilon,
    \end{align*}
    where $\bfu$ is the collection of finite element coefficients of the solution of a discretized parametric obstacle problem and $\bfkappa,\bff,\bfobs$ are the discretized coefficient, force, and obstacle.
    \item The combination of the provably well suited architecture and a multi-level decomposition of solutions as an output of the CNN leads to state-of-the-art numerical results.
\end{itemize}

\section{Preliminaries}\label{section: fem}
Throughout this work let $D\subset \mathbb{R}^d$ be a domain with a smooth boundary and $\Gamma\subset \mathbb{R}^{\mathbb{N}}$ be a countably infinite dimensional parameter space. Furthermore, let
$\obs:D\times\Gamma\to\mathbb{R}$ be a smooth obstacle such that $\obs(x,\bfy) \leq 0$ for all
$x\in\partial D,\bfy\in\Gamma$. 
Let $\kappa: D\times \Gamma \to \mathbb{R}$ be a coefficient,
which is uniformly bounded from above and below, i.e. there exist a constants $\lb, \ub>0$ such that for all $\bfy\in\Gamma$ 
and $x\in D$ it holds $\lb \leq \kappa(x,\bfy) \leq \ub$. 
This implies uniform ellipticity of the differential operator.
Let $f: D\times\Gamma \to\mathbb{R}$ be the forcing such that $f(\cdot,\bfy)\in L^2(D)$ for each $\bfy\in\Gamma$.

For $v\in H_0^1(D)$, we make use of the norms
\begin{align*}
    \norm{v}_{L^2(D)}^2 &\coloneqq \int_D v^2 \dx x,\\
    \norm{v}_{H^1(D)}^2 &\coloneqq \norm{v}_{L^2(D)}^2 + \norm{\nabla v}_{L^2(D)}^2,\\
    \norm{v}_{A_\bfy}^2 &\coloneqq \int_D \kappa(\cdot,\bfy) \scpr{\nabla v, \nabla v} \dx x.\\
\end{align*}
For $v_2\in L^2(D)\hookrightarrow H^{-1}$, we associate $v_2$ with its associated function in $H^{-1}$ to define the dual norm by
\begin{align}\label{eq: dual norm}
    \norm{v_2}_{H^{-1}} \coloneqq \sup_{\substack{v\in H^1\\ \norm{v}_{H^1}=1}} \int_D v_2(x) v(x) \dx{x}.
\end{align}

Generating training samples for our approach relies on numerical methods for solving problem~\eqref{equation: VI}.
As our method is heavily inspired by multigrid solvers, we now introduce the finite element based methodology usually applied to this type of problem.
Here solutions are approximated in finite dimensional subspaces of $H_0^1(D)$. 
It leads to an algebraic equation to identify the coefficients of a linear combination of basis functions determined by a mesh. 
In this work, classical P$1$ finite element spaces are considered. We skip well-known results about standard FEM and refer to~\cite{verfuerth,elman,braess} for a detailed overview.

In our setting, let $\T$ be a uniform triangulation of the domain $D$ with nodes $\mathcal{N}$. For the number of nodes in the triangulation $N\coloneqq\abs{\mathcal{N}}$ and each node $i=1,\dots,N$ let $\lambda_i$ be the nodal hat function, which is linear on every triangle, equal to $1$ at node $i$ and $0$ at every other node. The set of all such hat functions is then referred to a s the P$1$ FE basis. Let $h>0$ be the minimal side length over all triangles in $\T$. The considered finite element space is defined by $\fesp \coloneqq \spa \{\lambda_i : i=1,\dots N \}$. Functions $v_h\in\fesp$ can then be written as linear combination of basis functions in the form $v_h = \sum_{i=1}^N \bfv_i\lambda_i$ with coefficients $\bfv\in\mathbb{R}^{N}$.

Note that the uniform lower and upper bounds on $\kappa$ imply the existence of constants $c_{H^1},C_{H^1}, \chl>0$ such that for all $v_h\in V_h$ 
it holds that
\begin{align}
    c_{H^1}\norm{v_h}_{A_\bfy} \leq \norm{v_h}_{H^1(D)} &\leq C_{H^1} \norm{v_h}_{A_\bfy},\label{eq:norm equivalence A and H1}\\
    \norm{\scpr{\nabla v_h, \nabla v_h}}_{L^2(D)} &\leq \chl \norm{v_h}_{L^2(D)}.\label{eq: reverse Poincare}
\end{align}
The second equation is often called \emph{reverse Poincaré inequality}. 

\section{Parametric obstacle problem}
The considered \emph{obstacle problem} is described in~\eqref{equation: VI}. In our parametric setting, the obstacle, the coefficient and the forcing depend on some possibly countably infinite dimensional parameter vector $\bfy\in\Gamma\subset\mathbb{R}^{\mathbb{N}}$. This assumption is opposed to $\bfy$ only parameterizing the obstacle as for instance implemented in~\cite{schwab}.

Note that the contact set, i.e. the area where the solution is equal to the obstacle, is not known in advance. An example of a parameter field sample, the solution and the contact set is depicted in \Cref{fig:obstacle}.
\begin{figure}
    \centering
    \includegraphics[width=0.32\linewidth]{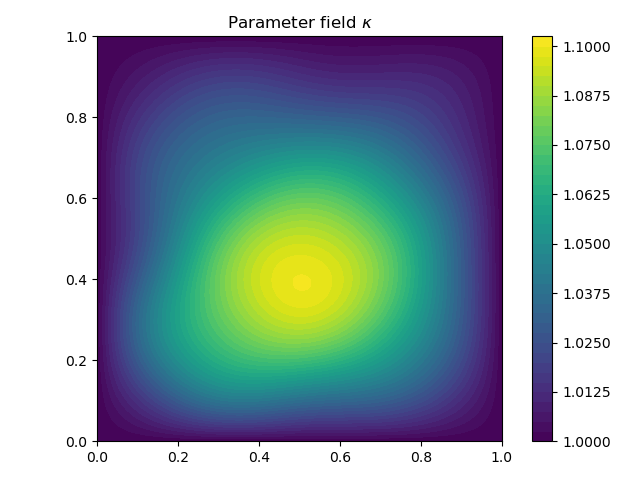}
    \includegraphics[width=0.34\linewidth]{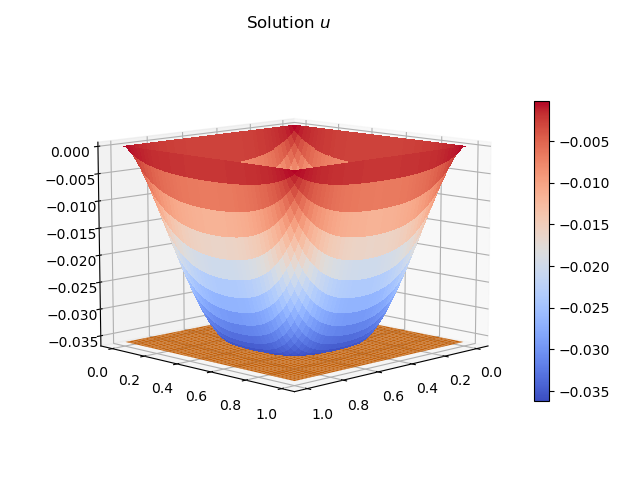}
    \includegraphics[width=0.32\linewidth]{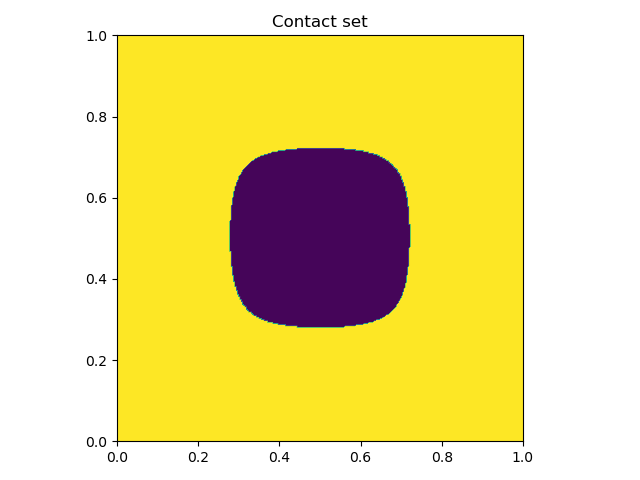}
    \caption{An example realization of a field $\kappa$, the respective solution to the obstacle problem $u$ and the corresponding contact set indicating where the solution is equal to the obstacle are shown for a constant obstacle $\obs \equiv -0.036$. The solution is equal to the obstacle in the purple part in the last image while it satisfies the PDE on the yellow part of the domain. Since the contact set is unknown in advance, it is part of the solution for the given parameter field.}
    \label{fig:obstacle}
\end{figure}

For the finite element approach, the problem is expressed in terms of a variational formulation.
Discretizing the obstacle $\obs(\cdot,\bfy)$, the ellipticity $\kappa(\cdot,\bfy)$ in $V_h$, testing the forcing $f$ in $V_h$, and discretizing the test functions $v$ and the solution $u$ in the set
$K \coloneqq \{v_h\in V_h : v_h\geq \obs \text{ a.e. in } D\}$
the following variational formulation can be derived.

\begin{problem}[Variational parametric obstacle problem]\label{problem: variational obstacle discretized}
    Find $u_h\in K$ such that for all $v_h\in K$ it holds
    \begin{align*}
        \int_D \kappa(\cdot,\bfy)\scpr{\nabla u_h, \nabla(v_h-u_h)} \dx{x} \geq \int_D f(x)(v_h-u_h)(x)\dx x.
    \end{align*}
\end{problem}
With uniform ellipticity as assumed here, it can be shown that a unique solution of~\eqref{problem: variational obstacle discretized} exists for every $\bfy\in\Gamma$, e.g. see~\cite[Chapter 2.2]{glowinski} for a proof. 
Note that choosing $V_h$ to be the finite element space of low order polynomials, e.g. here $P1$ basis functions, is sufficient. This is a result of the solution of the nonlinear problem in general not being "very smooth" over the whole domain despite smooth data, e.g. see~\cite[Chapter 5]{ciarlet}.
In terms of the finite element coefficients the problem is equivalently written in the following form, e.g. see~\cite[Chapter 2.3]{glowinski}.

\begin{problem}[Variational parametric obstacle problem, discretized] \label{problem: parametric obstacle discretized}
    Let $A_\bfy\in\mathbb{R}^{N\times N}$ be the discretized operator and $\bff\in\mathbb{R}^{ N}$ the tested forcing defined for $i,j=1,\dots,N$ by 
    \begin{align}\label{equation: Ay}
        (A_\bfy)_{i,j} = \int_D \kappa_h(x,\bfy) \scpr{ \nabla \lambda_i(x), \nabla \lambda_j(x)} \dx{x} \quad \text{ and } \quad \bff_i = \int_D f(x)\lambda_i(x)\dx x.
    \end{align}
    Find $\bfu \in\mathbb{R}^{N}$ such that for the discretized obstacle $\obs_h = \sum_{i=1}^N \bfobs_i\lambda_i$ it holds
    \begin{align}\label{equation: obstacle discretized cases}
        \begin{cases}
            \quad \bfu_i \geq \bfobs_i &\quad\text{for } i=1,\dots,N,\\
            \quad (A_\bfy\bfu)_i \geq \bff_i &\quad\text{for } i=1,\dots,N,\\
            \quad (A_\bfy\bfu)_i = \bff_i &\quad\text{for } \bfu_i > \bfobs_i.
        \end{cases}
    \end{align}
\end{problem}
Since $\obs$ is not constrained to be zero on the boundary, one has to apply the discretization of the obstacle with care. The utilization fo P$1$ elements here circumvents this consideration as the obstacle condition can equivalently be enforced on inner vertices.
Additionally, note that it holds
\begin{align}\label{eq: A_y norm equal}
    \norm{v_h}_{A_\bfy}^2 = \sum_{i,j=1}^N \bfv_i\bfv_j \int_D \kappa(\cdot,\bfy) \scpr{\nabla \lambda_i,\nabla\lambda_i}\dx x = \bfv^\intercal A_\bfy \bfv \eqqcolon \norm{\bfv}_{A_\bfy}^2.
\end{align}

\section{Multigrid solver}\label{sec: multigrid solver}

In this work the focus lies on solving the presented discretized obstacle problem~\eqref{equation: obstacle discretized cases} with a suitable NN architecture, which provides practical and theoretical benefits. For the theoretical underpinnings it is common to analyze NNs with respect to the number of trainable parameters as a measure of representation complexity. 
In the next chapters it is shown that our NN architecture is able 
to approximate a classical constrained multigrid solver. This means that the network is at least as expressive as multigrid solvers with the possibility to find more accurate solutions. 
The central property we require for the later analysis is that the algorithm exhibits a structure amenable to an efficient NN approximation.
This then allows the derivation of a quantitative convergence guarantee with complexity bounds.
The solver provably converges and is based on a multigrid algorithm with a projection method as a smoother on every grid as detailed subsequently.

\subsection{Projected Richardson iteration}\label{section: projected richardson}

Numerous algorithms have been developed in the past decades to solve \cref{problem: variational obstacle discretized} many of which are iterative approaches, see e.g.~\cite{solverforobstacle,glowinski,ZHANG20011505}. As a preparation for the later CNN construction, a projection method related to the Richardson iteration is introduced in the following. The particular version considered here can be found in~\cite[Section 3]{ZHANG20011505} and is called \emph{projected Richardson iteration} throughout this work.

The main idea is to iteratively update an approximate solution with a weighted residual and apply a projection operation to enforce the given obstacle constraint. Suppressing the dependence on $\bfy\in\Gamma$ in the notation, let $\omega>0$ be some damping parameter, $A\in\mathbb{R}^{N\times N}$ be defined as in~\eqref{equation: Ay}, $\bff$ the tested right-hand side and $\bfobs$ be the finite element coefficients of the obstacle of the parametric obstacle problem. Then, the algorithm consists of iterating dampened updates given by
\begin{align}
\begin{split}
    \bfu^{(0)} &\coloneqq \bfobs,\\
    \bfu^{(k+1)} &\coloneqq \max\left\{ \bfu^{(k)} + \omega \left( \bff - A \bfu^{(k)} \right), \bfobs \right\}, 
\end{split}\label{equation: projected richardson}
\end{align}
where the maximum is to be understood component-wise.
To analyze the convergence of the algorithm, the residual $\bfe^{(k)} \coloneqq \bfu^{(k)} - \bfu$ is considered, where $\bfu$ is the solution of~\cref{problem: parametric obstacle discretized}.
With~\eqref{equation: obstacle discretized cases}, the solution satisfies 
\begin{align*}
    \max\left\{ \bfu + \omega(\bff - A \bfu), \bfobs \right\} = \bfu + \max\left\{ \omega(\bff - A \bfu), \bfobs  -\bfu \right\} = \bfu.
\end{align*}
This can be seen by considering that for $i=1,\dots,N$ on the one hand $\bfobs_i - \bfu_i <0$ implies that $(\bff - A\bfu)_i = 0$ and therefore the maximum is zero. On the other hand, $\bfobs_i-\bfu_i=0$ implies that $(\bff - A\bfu)_i\leq 0$ also leading to a maximum of zero.
Using that the mapping $\bfx\mapsto \max\{\bfx,\bfobs\}$ is a contraction, the residual can be bounded as follows.
\begin{align}\label{equation: Richardson decay}
\begin{split}
    \norm{\bfe^{(k+1)}}_A 
    &= \norm{\bfu^{(k+1)} - \bfu}_A\\
    &= \norm{\max\left\{ \bfu^{(k)} + \omega\left(\bff - A \bfu^{(k)}\right), \bfobs \right\} - \max\left\{ \bfu + \omega(\bff - A \bfu), \bfobs \right\}}_A\\
    &\leq \norm{\bfu^{(k)} + \omega\left(\bff - A \bfu^{(k)}\right) - (\bfu + \omega(\bff - A \bfu))}_A\\
    &= \norm{\bfe^{(k)} + \omega A(\bfu - \bfu^{(k)})}_A\\
    &\leq \norm{I - \omega A}_A \norm{\bfe^{(k)}}_A.
\end{split}
\end{align}
Therefore, the rate of convergence of the method is bounded by the energy norm of $I - \omega A$, where $\omega>0$ needs to be chosen appropriately to ensure a contraction.
For $A \coloneqq A_\bfy$ defined as in \eqref{equation: Ay}, we choose and bound $\omega$ independently of $\bfy$ to ensure convergence of the projected Richardson iteration for any $\bfy\in\Gamma$. First, $\omega_\bfy$ is chosen dependent on $\bfy$ such that the norm is bounded by a constant smaller than $1$ also depending on $\bfy$.

\begin{lemma}[generalization of {\cite[Lemma B.2]{schutte2024multilevelcnnsparametricpdes} or \cite[Lemma 4.3]{braess}}]\label{lm: subspace contraction}\label{theorem:omega>gamma}
    Let $\bfy\in\Gamma$ and $\kappa(\cdot,\bfy)>0$ everywhere.
    Then for any nonzero $\bfw\in\mathbb{R}^{N}$ it holds that
    \begin{align*}
        \norm{(I - \omega_\bfy A_\bfy)\bfw}_{A_\bfy} \leq  (1-\omega_\bfy) \norm{\bfw}_{A_\bfy},
    \end{align*}
    where $0<\omega_\bfy\leq \sigma_{\max}(A_\bfy)^{-1}$. 
\end{lemma}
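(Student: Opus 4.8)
The plan is to work entirely in the energy inner product $\inner{\cdot}{\cdot}_{A_\bfy}$ induced by the symmetric positive definite matrix $A_\bfy$, and to reduce the claimed bound to a statement about eigenvalues of $A_\bfy$. First I would note that $A_\bfy$ is symmetric positive definite (it is a stiffness matrix for a uniformly elliptic coefficient, and $\kappa(\cdot,\bfy)>0$), so it admits an orthonormal eigenbasis $\bfv_1,\dots,\bfv_N$ with eigenvalues $0<\lambda_1\leq\dots\leq\lambda_N=\sigma_{\max}(A_\bfy)$. The operator $I-\omega_\bfy A_\bfy$ is then self-adjoint with respect to $\inner{\cdot}{\cdot}_{A_\bfy}$ as well (it commutes with $A_\bfy$), so its $A_\bfy$-operator norm equals its spectral radius, namely $\max_i |1-\omega_\bfy\lambda_i|$.

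Next I would expand an arbitrary nonzero $\bfw=\sum_i c_i\bfv_i$ and compute
\begin{align*}
    \norm{(I-\omega_\bfy A_\bfy)\bfw}_{A_\bfy}^2 = \sum_{i=1}^N (1-\omega_\bfy\lambda_i)^2 \lambda_i c_i^2,
    \qquad
    \norm{\bfw}_{A_\bfy}^2 = \sum_{i=1}^N \lambda_i c_i^2,
\end{align*}
using $\norm{\bfv}_{A_\bfy}^2=\bfv^\intercal A_\bfy\bfv$ from \eqref{eq: A_y norm equal}. Thus it suffices to show $|1-\omega_\bfy\lambda_i|\leq 1-\omega_\bfy$ for every $i$. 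Since $0<\omega_\bfy\leq \sigma_{\max}(A_\bfy)^{-1}=\lambda_N^{-1}$, we have $0<\omega_\bfy\lambda_i\leq 1$ for all $i$, hence $1-\omega_\bfy\lambda_i\geq 0$ and $|1-\omega_\bfy\lambda_i| = 1-\omega_\bfy\lambda_i$. Finally, because $\lambda_i\geq\lambda_1$ and — here is the one genuinely nontrivial ingredient — $\lambda_1\geq 1$, one gets $1-\omega_\bfy\lambda_i\leq 1-\omega_\bfy\lambda_1\leq 1-\omega_\bfy$, which is exactly the desired bound after taking square roots.

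The main obstacle is justifying the lower bound $\lambda_{\min}(A_\bfy)\geq 1$ on the smallest eigenvalue, i.e. that $\bfw^\intercal A_\bfy\bfw \geq \norm{\bfw}_{\ell^2}^2$; this is precisely the content of the cited results (\cite[Lemma B.2]{schutte2024multilevelcnnsparametricpdes}, \cite[Lemma 4.3]{braess}) and is where the P$1$ finite element structure and the scaling of the mesh enter. I expect the argument to reduce this to the mass-matrix comparison $\bfw^\intercal M \bfw \le \norm{\bfw}_{\ell^2}^2$ together with a Poincaré-type inequality, or alternatively to a direct estimate exploiting that the stiffness matrix entries scale so that the lumped contributions dominate; since $\kappa(\cdot,\bfy)\geq\lb$ is only assumed positive (not bounded below uniformly) in this lemma, the constant here will depend on $\bfy$, consistent with $\omega_\bfy$ being $\bfy$-dependent. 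Everything else is the routine spectral computation above.
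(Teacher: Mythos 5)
Your spectral reduction is correct, and it is in fact \emph{cleaner} than the paper's own proof: you bypass the Hölder detour entirely and go straight to the per-eigenvalue estimate. Both arguments funnel into the same bottleneck, which you correctly isolate: one needs $\sigma_{\min}(A_\bfy)\ge 1$. The problem is that this is not ``the content of the cited results'' — it does \emph{not} follow from the lemma's hypotheses. Positivity of $\kappa(\cdot,\bfy)$ gives only $\sigma_{\min}(A_\bfy)>0$, and nothing in the statement fixes the scaling of the stiffness matrix. A two-by-two toy example already breaks it: with $A_\bfy=\diag(0.1,1)$, $\omega_\bfy=\sigma_{\max}(A_\bfy)^{-1}=1$, and $\bfw=(1,0)^\intercal$ one has $\norm{(I-\omega_\bfy A_\bfy)\bfw}_{A_\bfy}=0.9\,\norm{\bfw}_{A_\bfy}$ while $(1-\omega_\bfy)\norm{\bfw}_{A_\bfy}=0$. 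So what you flag as the ``one genuinely nontrivial ingredient'' is really a missing hypothesis, not an omitted standard fact.

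What makes your honesty here useful is that the paper's own proof contains exactly this gap, hidden behind an arithmetic slip: from $|\bfw|^2=\sum_i\sigma_i(1-\sigma_i\omega_\bfy)c_i^2$ it asserts $|\bfw|^2=\sum_i\sigma_i c_i^2-\sum_i\sigma_i\omega_\bfy c_i^2=(1-\omega_\bfy)\norm{\bfw}_{A_\bfy}^2$, dropping a factor of $\sigma_i$ in the subtracted sum; the correct expansion is $|\bfw|^2=\sum_i\sigma_i c_i^2-\omega_\bfy\sum_i\sigma_i^2c_i^2$, which only collapses to $(1-\omega_\bfy)\norm{\bfw}_{A_\bfy}^2$ if every $\sigma_i=1$, and only becomes a ``$\le$'' if every $\sigma_i\ge 1$. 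Running either your direct argument or the paper's corrected Hölder argument yields the standard and provable estimate
\begin{align*}
\norm{(I-\omega_\bfy A_\bfy)\bfw}_{A_\bfy}\;\le\;\bigl(1-\omega_\bfy\,\sigma_{\min}(A_\bfy)\bigr)\,\norm{\bfw}_{A_\bfy},
\end{align*}
which is all that the hypothesis $\kappa(\cdot,\bfy)>0$ supports. To obtain the lemma as stated one must either carry the factor $\sigma_{\min}(A_\bfy)$ in the contraction rate (and propagate it into \cref{lemma:gamma} and \cref{theorem: full CNN from richardson}) or add an explicit hypothesis such as $\sigma_{\min}(A_\bfy)\ge 1$, together with a justification from the mesh normalization and a uniform lower bound on $\kappa$, neither of which is currently assumed here.
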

The proof can be found in \cref{section: proof of CNN for obstacle}. 
Second, $\omega$ is chosen independently of $\bfy$ such that the operator norm is bounded by a constant smaller than $1$, which is also independent of $\bfy$.
\begin{lemma}\label{lemma:gamma}
    Assume that $\kappa$ is uniformly bounded, i.e. there exists a constant $\ub>0$ such that $\kappa(x,\bfy)\leq \ub$ for all $x\in D, \bfy\in\Gamma$. Then, for all $0<\omega\leq\frac{1}{\ub\chl}$ and $\bfy\in\Gamma$ it holds $\norm{I - \omega A_\bfy}_{A_\bfy} \leq 1-\omega$.
\end{lemma}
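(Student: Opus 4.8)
The plan is to deduce this directly from \cref{lm: subspace contraction}. That lemma already yields, for every $\bfy\in\Gamma$ with $\kappa(\cdot,\bfy)>0$ — which holds here since the preliminaries assume $\lb\le\kappa(\cdot,\bfy)$ — and for every damping $\omega_\bfy$ with $0<\omega_\bfy\le\sigma_{\max}(A_\bfy)^{-1}$, that $\norm{(I-\omega_\bfy A_\bfy)\bfw}_{A_\bfy}\le(1-\omega_\bfy)\norm{\bfw}_{A_\bfy}$ for all $\bfw\in\RR^N$, hence $\norm{I-\omega_\bfy A_\bfy}_{A_\bfy}\le 1-\omega_\bfy$. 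Thus it suffices to show that the single, $\bfy$-independent threshold $\omega\le(\ub\chl)^{-1}$ is admissible in \cref{lm: subspace contraction} for every $\bfy\in\Gamma$, i.e.\ that
\[
  \sigma_{\max}(A_\bfy)\ \le\ \ub\,\chl\qquad\text{for all }\bfy\in\Gamma .
\]
Given this, $0<\omega\le(\ub\chl)^{-1}\le\sigma_{\max}(A_\bfy)^{-1}$, and \cref{lm: subspace contraction} applied with $\omega_\bfy:=\omega$ gives exactly $\norm{I-\omega A_\bfy}_{A_\bfy}\le 1-\omega$.

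The only real work is therefore the uniform spectral bound. Since $A_\bfy$ is symmetric positive definite, $\sigma_{\max}(A_\bfy)$ has the usual variational characterization as a supremum of Rayleigh quotients $\norm{v_h}_{A_\bfy}^2/\norm{v_h}^2$ over $v_h\in\fesp\setminus\{0\}$, where $v_h=\sum_i\bfv_i\lambda_i$ corresponds to the coefficient vector $\bfv\in\RR^N$. For the numerator, \eqref{eq: A_y norm equal} together with the upper bound $\kappa(\cdot,\bfy)\le\ub$ gives
\[
  \norm{v_h}_{A_\bfy}^2=\int_D\kappa(\cdot,\bfy)\scpr{\nabla v_h,\nabla v_h}\dx x\ \le\ \ub\int_D\scpr{\nabla v_h,\nabla v_h}\dx x ,
\]
and the remaining integral — which no longer depends on $\bfy$ — is controlled by $\chl\norm{v_h}^2$ by means of the reverse Poincar\'e inequality \eqref{eq: reverse Poincare}. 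Dividing and taking the supremum over $v_h\ne0$ yields $\sigma_{\max}(A_\bfy)\le\ub\chl$, and the bound is uniform in $\bfy$ precisely because $\ub$ is the coefficient bound and $\chl$ depends only on the finite element space $\fesp$, not on the parameter.

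The main (and essentially the only) obstacle is to make this last step precise with the stated constant $\chl$ from \eqref{eq: reverse Poincare}, i.e.\ to confirm that the reverse Poincar\'e inequality captures exactly the $\bfy$-independent part of the energy and does so uniformly over $\Gamma$; everything else is a mechanical reduction to \cref{lm: subspace contraction}. In contrast to the $\bfy$-dependent threshold of that lemma, the whole point of the present statement is that the single cutoff $(\ub\chl)^{-1}$ works simultaneously for all $\bfy\in\Gamma$, which is what allows $\omega$ to be fixed once and for all in the CNN construction that follows.
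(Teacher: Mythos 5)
Your approach is exactly the paper's: reduce the claim to the uniform spectral bound $\sigma_{\max}(A_\bfy)\leq\ub\chl$ (using the upper bound on $\kappa$ and the inverse inequality on the $\bfy$-independent part) and then invoke \cref{lm: subspace contraction} with a fixed $\omega$. That reduction is correct, and the observation that the whole point of this lemma versus \cref{lm: subspace contraction} is the $\bfy$-uniformity is on target.

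One step deserves more care: the claim that \eqref{eq: reverse Poincare} controls $\int_D\scpr{\nabla v_h,\nabla v_h}\,\dd x$ by $\chl\norm{v_h}^2$ to the \emph{first} power of $\chl$. Reading the inverse inequality in its standard form $\norm{\nabla v_h}_{L^2(D)}\leq\chl\norm{v_h}_{L^2(D)}$ and squaring gives $\int_D\abs{\nabla v_h}^2\,\dd x\leq\chl^2\int_D v_h^2\,\dd x$, so the Rayleigh-quotient bound is $\sigma_{\max}(A_\bfy)\leq\ub\chl^2$ — which is in fact what the paper's own proof writes, in mild tension with the lemma's stated threshold $(\ub\chl)^{-1}$. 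Your derivation drops this square; you should either carry the $\chl^2$ through (and note that the admissible range is then $0<\omega\leq(\ub\chl^2)^{-1}$) or make explicit which normalization of $\chl$ you are adopting so that the exponent comes out as stated. The structure of the argument is unaffected, but as written the passage from \eqref{eq: reverse Poincare} to $\sigma_{\max}(A_\bfy)\leq\ub\chl$ is not justified.
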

\begin{proof}
    First, we note that the maximal eigenvalue of $A_\bfy$ is bounded as can be derived as follows. Let $\bfv\in\mathbb{R}^{\dim V_h}$ be an eigenvector of $A_\bfy$ corresponding to the maximal eigenvalue and $v_h\in V_h$ be the corresponding finite element function. Then, for the equivalence constant $\chl$ of the inverse Poincare inequality 
    in the finite dimensional space $V_h$ it holds that
    \begin{align*}
        \sigma_{{\max}}(A_\bfy) = \frac{\bfv^T A_\bfy \bfv}{\bfv^T\bfv} = \frac{\int_D \kappa_h(\cdot,\bfy) \scpr{\nabla v_h, \nabla v_h} \dx{x}}{\int_D v_h^2 \dx{x}} \leq \ub \frac{\int_D \scpr{\nabla v_h, \nabla v_h} \dx{x}}{\int_D v_h^2 \dx{x}} \leq \ub \chl^2.
    \end{align*}
    Choosing $0 <\omega \leq (\ub \chl^2)^{-1} \leq \sigma_{\max}(A_\bfy)^{-1}$ for all $\bfy\in\Gamma$ yields the claim with \cref{theorem:omega>gamma}.
\end{proof}

\subsection{Geometric Multigrid}\label{section:geometricMultigrid}
Note that the convergence rate $1-\omega$ in \cref{lm: subspace contraction} is close to one for small $\omega$, i.e. for large constants $\chl$ defined in~\eqref{eq: reverse Poincare}.
In case of a uniform triangulation as considered here it can be shown that $\chl = C h^{-1}$ is a possible choice for some constant $C>0$ only depending on the angles of the triangles, see e.g.~\cite[Lemma 1.26]{elman} or~\cite{thomee}.
In the two dimensional setting with a uniform triangulation as considered here, the nodes are arranged on a uniform grid. Therefore, the minimal side length of all triangles is given by $h = (\sqrt{N}-1)^{-1}$
leading to $C_{V_h} = C (\sqrt{N}-1)$.
\cref{lm: subspace contraction} therefore yields $\omega \lesssim N^{-1}$. 
This relationship implies slow convergence of the projected Richardson iteration for high fidelity discretizations, which is a reason why it cannot be considered state-of-the-art when solving discretized PDEs.

Instead, the projected Richardson iteration provides the basis for geometric multigrid methods, which are able to efficiently solve the problem at hand \cite{solverforobstacle,kornhuber2002}.
An interplay between different meshes can lead to a speedup in convergence with a number of necessary iterations independent of the grid fidelity. Such results have been shown for instance for discrete Poisson problems in~\cite[Theorem 2.14]{elman} and \cite[Theorem 4.2]{braess}.

Multigrid methods are based on a set of $L\in\mathbb{N}$ triangulations $\T_1,\dots,\T_L$, e.g. generated by a uniform or adaptive mesh refinement starting on the coarsest mesh $\T_1$, with nodes $\N_1,\dots,\N_L$ and $N_\ell \coloneqq\abs{\N_\ell}$ for $\ell=1,\dots,L$ such that the corresponding subsequent finite element spaces are nested
\begin{align*}
    V_1\subset \dots \subset V_L \subset H_0^1(D).
\end{align*}
The method then projects approximate solutions to finer spaces or restricts them to coarser spaces, applying smoothing iterations (the projected Richardson iteration) on coarse and fine grids successively. 
The considered prolongation operator used to interpolate functions on coarse grids into spaces on fine grids in the discretized setting is defined as follows.
\begin{definition}[Prolongation matrices]
    Let $L\in\mathbb{N}$ be the number of grids and for some $\ell\in\{1,\dots, L\}$ let $V_\ell$ and $V_{\ell+1}$ as above. Then, the \emph{prolongation matrix} $P_\ell\in\mathbb{R}^{N_{\ell+1}\times N_\ell}$ is the matrix representation of the canonical embedding of $V_\ell$ into $V_{\ell+1}$ under their respective finite element basis functions.
\end{definition}
The considered restriction operator maps coefficients on a fine grid to coefficients on a coarse grid such that the obstacle condition is still satisfied. The problem of applying the the restriction used in~\cite{JMLR:v24:23-0421} to the obstacle problem is shown in~\Cref{fig:restriction}.
\begin{definition}[Monotone restriction operator]\label{definition: Rl}
    For $\ell=1,\dots,L-1$ let $V_{\ell} \coloneqq \spa \{\lambda_i^{(\ell)}\}_{i=1}^{N_\ell}$ and $V_{\ell+1} \coloneqq \spa \{\lambda^{(\ell+1)}_i\}_{i=1}^{N_{\ell+1}}$ be two nested P$1$ finite element spaces as above. Then, define the \emph{monotone restriction operator} $\Rl:\mathbb{R}^{N_{\ell+1}} \to \mathbb{R}^{N_{\ell}}$ by
    \begin{align*}
        (\Rl \bfu)_i = \max \left\{ \bfu_j: \supp \lambda_j^{(\ell+1)} \subset \supp \lambda_i^{(\ell)} \right\}.
    \end{align*}
\end{definition}
\begin{figure}
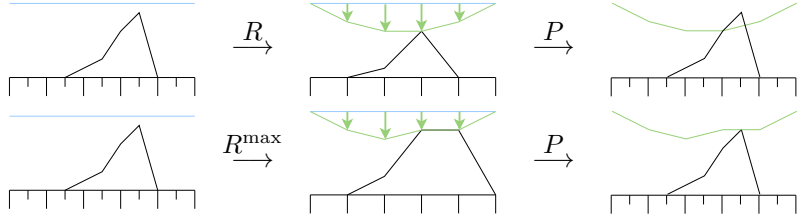

    \centering
    \begin{tikzpicture}
        \node at (0,0) {\includesvg[width=0.15\textwidth]{images/RestrictionOperator_fine.svg}};
        \node at (4,0) {\includesvg[width=0.15\textwidth]{images/RestrictionOperator_wrong_coarse.svg}};
        \node at (8,0) {\includesvg[width=0.15\textwidth]{images/RestrictionOperator_wrong_fine.svg}};
        \node at (2,0) {$\longrightarrow$};
        \node[above] at (2, 0) {$R$};
        \node at (6,0) {$\longrightarrow$};
        \node[above] at (6, 0) {$P$};

        \node at (0,-1.5) {\includesvg[width=0.15\textwidth]{images/RestrictionOperator_fine.svg}};
        \node at (4,-1.5) {\includesvg[width=0.15\textwidth]{images/RestrictionOperator_right_coarse.svg}};
        \node at (8,-1.5) {\includesvg[width=0.15\textwidth]{images/RestrictionOperator_right_fine.svg}};
        \node at (2,-1.5) {$\longrightarrow$};
        \node[above] at (2, -1.5) {$R^{\max}$};
        \node at (6,-1.5) {$\longrightarrow$};
        \node[above] at (6, -1.5) {$P$};
    \end{tikzpicture}
    \caption{The first row images show the weighted restriction as defined in \cite{JMLR:v24:23-0421}. A visualization of the restriction operator defined in \cref{definition: Rl} is depicted in the second row images. In both rows the first image illustrates an obstacle in black and an initial guess for the solution in blue. The second images show the restricted obstacle together with a coarse grid solution in green. The last images depicts the prolongated coarse grid solution together with the true obstacle. It can be seen that taking a maximum, when restricting the obstacle, is critical for the coarse grid solution to still be above or equal to the true obstacle on the finer grid. The dependence on a level $\ell$ is suppressed in the notation. }
    \label{fig:restriction}
\end{figure}

These operators are used in the multigrid V-Cycle with monotone restriction ($\mathrm{VCMR}$) in~\cref{alg: vcmr}. Starting on the finest grid $\ell=L$, the algorithm performs $k$ projected Richardson iterations on the given grid. Subsequently, the problem is projected to a coarser grid to approximate a correcting term, where computations are cheaper. To approximate this term, the $\mathrm{VCMR}$ is called again with inputs restricted to the next coarser grid $\ell-1$. The obstacle is restricted by the monotone restriction operator and the residual and the operator are restricted by the transposed prolongation (or weighted restriction) operator. On the coarsest level $\ell=1$, the solution to the input problem is computed directly, e.g. by applying projected Richardson iterations until the algorithm has converged.
The correction terms are returned to the higher levels and added to the current solution approximations. After the correction is added, another $m$ smoothing steps are performed. 
The notation $\vcmrm$ is used to describe the application of the $\mathrm{VCMR}_{k,\ell}$ $m\in\mathbb{N}$ times with $k_0$ smoothing steps on the coarsest grid.

Empirically, the described multigrid method provides a significant speedup. While for some problems the speedup can be quantified theoretically~\cite{elman,braess}, to the best of our knowledge this has not been shown for the obstacle problem. 
In contrast to multigrid methods for PDEs the nonlinearity of the obstacle introduces errors in the coarse grid corrections through the monotone restriction, which can slow down asymptotic convergence rates as for instance described in~\cite[Section 5.2]{solverforobstacle}. 
The reason for using the projected Richardson iteration on every grid, despite it not being a state-of-the-art method (see~\cite{solverforobstacle,ZHANG20011505}), is its simplicity and the implications for the neural network architecture analyzed in the following chapters.

\begin{algorithm}
    \caption{Multigrid V-Cycle with monotone restriction: $\mathrm{VCMR}_{k,\ell}$}\label{alg: vcmr}
    \textbf{Input:} $\bfu, \bff, A_\bfy, \bfobs$\\
    \For{$k$ pre-smoothing steps}{
    $\bfu \leftarrow \max\left\{\bfu + \omega(\bff - A_\bfy \bfu), \bfobs \right\}$ \Comment{Perform smoothing steps}
    }
    \If{$\ell=1$}{
    solve $A_\bfy \bfu = \bff$ for $\bfu$ on coarsest level \Comment{E.g. by smoothing until converged}
    }
    \Else{
    $\overline{\bfobs} \leftarrow \Rl (\bfobs - \bfu) $ \Comment{Compute monotone restricted obstacle}\\
    $\overline\bfr \leftarrow P_{\ell-1}^\intercal(\bff - A_\bfy \bfu)$ \Comment{Compute restricted residual}\\
    $\overline A_\bfy \leftarrow P_{\ell-1}^\intercal A_\bfy P_{\ell-1}$ \Comment{Compute restricted operator}\\
    $\overline{\bfe}\leftarrow$ $\mathrm{VCMR}_{k,\ell-1}(0, \overline{\bfr}, \overline{A_\bfy}, \overline{\bfobs})$ \Comment{Use V-Cycle with monotone restriction on coarser grid}\\
    $\bfu \leftarrow \bfu + P_{\ell-1}\overline{\bfe}$ \Comment{Add coarse correction}
    }
    \For{$k$ post-smoothing steps}{
    $\bfu \leftarrow \max\left\{\bfu + \omega(\bff - A_\bfy \bfu), \bfobs \right\}$ \Comment{Perform smoothing steps}
    }
    \textbf{return:} $\bfu$
\end{algorithm}

\section{Convolutional neural network}

Convolutional neural networks (CNNs) have been proven to be an efficient tool for approximating solutions to parametric PDEs, see~\cite{JMLR:v24:23-0421, schutte2024adaptive, schutte2024multilevelcnnsparametricpdes}.
For the application of this architecture to the parametric obstacle problem, some additional steps have to be taken.
For the analysis, the following conditions are assumed for the activation function throughout this paper.
\begin{assumption}[Activation function]\label{ass: sigma}
    Let $\tau:\mathbb{R}\to\mathbb{R}$ satisfy the following conditions:
    \begin{enumerate}
        \item There exists $x_0\in\mathbb{R}$ and an open interval $I\subset\mathbb{R}$ with $x_0\in I$ such that $\tau$ is three times differentiable on $I$ and $\tau''(x_0) \neq 0$.
        \item For any $\varepsilon>0$ there exist two affine-linear mappings $\rho_1,\rho_2:\mathbb{R}\to \mathbb{R}$ such that for all $x\in\mathbb{R}$
        \begin{align*}
            \abs{(\rho_2\circ\tau\circ\rho_1)(x) - \max\{x,0\}} \leq\varepsilon.
        \end{align*}
    \end{enumerate}
\end{assumption}
These conditions are fulfilled by a family of soft ReLU variants such as Softplus, SeLU, or ELU, see~\cite{ReLUvariants}. The first condition is needed in the expressivity analysis to approximate multiplication, while the second condition is applied to approximate the maximum in the projected Richardson iteration. In the expressivity theory in~\cite{JMLR:v24:23-0421}, only the first condition is necessary since the maximum does not need to be approximated. The number of trainable parameters of a given CNN $\Psi$ is denoted by $M(\Psi)$.

\subsection{Expressivity}

Similar to \cite[Theorem 6]{JMLR:v24:23-0421} an approximation of a multigrid V-Cycle can be can be imitated by a CNN. Here the $\vcmrm$ is approximated with projected Richardson iteration instead of a plain Richardson iteration and newly including the monotone restriction operator.
For the expressivity analysis, it is first shown that the projected Richardson iteration can be approximated by a CNN of size growing linearly in the number of iterations.

\begin{theorem}[CNN for the projected Richardson iteration]\label{theorem: CNN for richardson}
    There exists a constant $C>0$ such that for any $\omega,M,\varepsilon>0$ and $m\in\mathbb{N}$ there exists a CNN $\Psi:\mathbb{R}^{4\times n\times n} \to \mathbb{R}^{1\times n\times n}$ such that for all discretized coefficients $\bfkappa \in [-M,M]^{n\times n}$, initial guesses $\bfu\in[-M,M]^{n\times n}$, obstacles $\bfobs\in [-M,M]^{n\times n}$ and tested right-hand sides $\bff\in[-M,M]^{n\times n}$ it holds
    \begin{enumerate}[label=(\roman*)]
        \item $\norm{\Psi(\bfu,\bfkappa,\bff,\bfobs) - \bfu^{(m)}}_{H^1(D)} \leq \varepsilon$,
        \item number of parameters bounded by $M(\Psi) \leq Cm$.
    \end{enumerate}
\end{theorem}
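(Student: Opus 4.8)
The plan is to build the CNN $\Psi$ by composing $m$ identical blocks, each of which approximates one step of the projected Richardson iteration $\bfu^{(k+1)} = \max\{\bfu^{(k)} + \omega(\bff - A_\bfy \bfu^{(k)}), \bfobs\}$, and then to control the accumulated error by a Lipschitz/stability argument. First I would note that the discretized operator $A_\bfy$ acts on finite element coefficients essentially as a local stencil: on a uniform P$1$ triangulation $(A_\bfy \bfu)_i$ is a weighted combination of $\bfu_j$ over neighbours $j$ of $i$, with weights that are affine-linear (in fact, integrals of products of gradients of hat functions) in the nodal values of $\bfkappa$. Hence the map $(\bfu,\bfkappa)\mapsto A_\bfy\bfu$ is, up to boundary bookkeeping, a fixed bilinear function of a $3\times 3$ (or $5$-point) patch of $\bfu$ and a patch of $\bfkappa$. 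The key approximation ingredient is therefore a CNN gadget that approximately multiplies two real inputs to arbitrary accuracy with $O(1)$ parameters, which exists by the first condition of \cref{ass: sigma} (the standard trick using $\tau''(x_0)\neq 0$ and second differences to extract the quadratic part, then polarization $xy = \tfrac14((x+y)^2 - (x-y)^2)$); I would cite the analogous construction in \cite{JMLR:v24:23-0421}. Combining finitely many such multiplication gadgets in parallel with affine channel mixing yields a CNN layer that computes $\bfu + \omega(\bff - A_\bfy\bfu)$ up to any prescribed error with a number of parameters independent of the target accuracy.

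Next I would handle the projection $\max\{\cdot,\bfobs\}$. Componentwise, $\max\{a,b\} = b + \max\{a-b,0\}$, so it suffices to approximate the scalar ReLU $x\mapsto\max\{x,0\}$, which is exactly what the second condition of \cref{ass: sigma} provides: there are affine maps $\rho_1,\rho_2$ with $\|\rho_2\circ\tau\circ\rho_1 - \max\{\cdot,0\}\|_\infty\le\delta$. So one Richardson block is: (affine) form $\bfz := \bfu + \omega(\bff - \widetilde{A}_\bfy\bfu)$ using the multiplication gadgets and linear mixing; (affine) form $\bfz - \bfobs$; apply the approximate ReLU channelwise; (affine) add back $\bfobs$. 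All inputs $\bfu,\bfkappa,\bff,\bfobs$ must be carried through each block as auxiliary channels so the next block can reuse them; this only multiplies the channel count by a constant. Each block has $O(1)$ parameters, so $m$ blocks give $M(\Psi)\le Cm$, which is claim (ii).

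For claim (i) — the error bound — I would argue by propagation of one-step errors. Let $\Phi$ denote the exact one-step map $\bfu\mapsto\max\{\bfu+\omega(\bff-A_\bfy\bfu),\bfobs\}$ and $\widehat\Phi$ the CNN block. The exact map $\Phi$ is Lipschitz in the $A_\bfy$-norm with constant $\|I-\omega A_\bfy\|_{A_\bfy}$ (the $\max$ is a contraction, as used in \eqref{equation: Richardson decay}); using the norm equivalences \eqref{eq:norm equivalence A and H1} one gets a uniform (in $\bfy$) Lipschitz constant $L$ in the $H^1$-norm, independent of the target accuracy. On the compact input box $[-M,M]^{n\times n}$ all iterates $\bfu^{(k)}$ and their CNN approximations stay in a fixed compact set (the updates are bounded by data and $\omega$), so on that set $\|\widehat\Phi - \Phi\|\le\delta$ for a $\delta$ we can make as small as we like by refining the multiplication and ReLU gadgets at $O(1)$ parameter cost. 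Then a telescoping estimate gives, after $m$ blocks, a total error at most $\delta(L^{m-1}+\cdots+L+1)\le \delta\, m\max\{1,L\}^{m}$; choosing $\delta$ small enough (depending on $m$, $L$, $\varepsilon$ but \emph{not} affecting the parameter count) yields $\|\Psi(\bfu,\bfkappa,\bff,\bfobs)-\bfu^{(m)}\|_{H^1(D)}\le\varepsilon$. The final conversion from the coefficient-space Euclidean bounds produced by the CNN to the $H^1(D)$ norm uses \eqref{eq:norm equivalence A and H1} together with the standard $\|v_h\|_{H^1}\lesssim h^{-1}\|\bfv\|_2$-type equivalence on the fixed mesh.

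The main obstacle I anticipate is the bookkeeping around (a) realizing $A_\bfy\bfu$ exactly as a convolution with learnable-but-fixed multiplication subnetworks — in particular getting the boundary rows right when using the convolutional (translation-invariant) structure, which typically requires a zero-padding convention and possibly a masking channel — and (b) making the error-propagation constants genuinely uniform in $\bfy\in\Gamma$, which is where \cref{lemma:gamma} / \cref{theorem:omega>gamma} and the uniform bounds $\lb\le\kappa\le\ub$ are essential, since otherwise the Lipschitz constant $L$ could degrade. Everything else — the multiplication gadget, the ReLU gadget, carrying channels through, counting parameters — is routine once those two points are pinned down, and largely parallels \cite[Theorem 6]{JMLR:v24:23-0421} with the projection step as the only genuinely new ingredient.
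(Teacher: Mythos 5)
Your proposal is correct and takes essentially the same approach as the paper's proof: compose $m$ identical blocks, each realizing one projected Richardson step via a bilinear multiplication gadget (for $A_\bfy\bfu$, which is a local bilinear stencil in patches of $\bfu$ and $\bfkappa$) and a ReLU-type gadget (for the componentwise $\max$, via the second condition of \cref{ass: sigma}), carry all inputs through as auxiliary channels, count $O(1)$ parameters per block, and control the total error by a Lipschitz/concatenation argument over a fixed compact set — this is precisely what the paper does by invoking \cite[Lemma 15, Theorem 18, Lemma 20]{JMLR:v24:23-0421} and \cref{lemma: CNN max approximation}.

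One small inaccuracy in your anticipated obstacle (b): for \emph{this} theorem the uniform ellipticity bounds and \cref{lemma:gamma}/\cref{theorem:omega>gamma} are not needed, because the CNN only has to approximate the iterate $\bfu^{(m)}$, not the true solution, and the error propagation only has to hold over the fixed compact box $[-M,M]^{n\times n}$; the one-step Lipschitz constant there may be as large as it likes, since shrinking the per-block tolerance $\delta$ costs no extra parameters. Uniform bounds on $\kappa$ and the choice of $\omega$ enter only in \cref{theorem: full CNN from richardson}, where the iterate must actually converge to $\bfu$.
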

The proof can be found in~\cref{section: proof of CNN for obstacle}.
As derived in~\cref{section: projected richardson}, the projected Richardson iteration can approximate the true solution up to arbitrary accuracy if the smoothing coefficient $\omega>0$ is chosen appropriately. 
Then, the CNN approximating the Richardson iteration also approximates the solution of problem~\cref{problem: parametric obstacle discretized} as shown in the next corollary.

\begin{corollary}[CNN for parametric obstacle problem]\label{theorem: full CNN from richardson}
    Assume that $\lb \leq \kappa(x,\bfy)\leq\ub$ is uniformly bounded by some $\lb,\ub>0$ over all $\bfy\in\Gamma, x\in D$. Let $\bfu(\cdot,\bfy, \bfobs)$ denote the solution of \cref{problem: parametric obstacle discretized} for the parameter vector $\bfy\in\Gamma$ and obstacle $\bfobs$. For every $\varepsilon>0$, there exists a CNN $\Psi$ such that 
    \begin{enumerate}[label=(\roman*)]
        \item for all $\bfy\in\Gamma$ it holds that \begin{align*}
        \norm{\Psi (\bfobs, \bfkappa_\bfy,\bff, \bfobs) - \bfu(\cdot,\bfy, \bfobs)}_{H^1} \leq \varepsilon \left( C^2_{H^1}\norm{f}_{H^{-1}} + \frac{C_{H^1}}{c_{H^1}} \norm{\bfobs}_{H^1}\right),
        \end{align*}
    \item the number of parameters is bounded by $\# \Psi \leq C \left\lceil {\log\left({\varepsilon^{-1}}\right)\log \left({1-(\ub\chl)^{-1}}\right)^{-1}}\right\rceil$, where $C>0$ is the independent constant from \cref{theorem: CNN for richardson}.
    \end{enumerate}
\end{corollary}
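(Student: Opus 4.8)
The plan is to combine the expressivity bound of \cref{theorem: CNN for richardson} with the convergence analysis of the projected Richardson iteration from \cref{section: projected richardson}. Concretely, by \cref{lemma:gamma} we may fix $\omega \defeq (\ub\chl^2)^{-1}$ (independently of $\bfy$), so that $\norm{I-\omega A_\bfy}_{A_\bfy}\le 1-\omega$ for every $\bfy\in\Gamma$. Iterating the contraction estimate \eqref{equation: Richardson decay} with initial guess $\bfu^{(0)} = \bfobs$ gives, for $m$ projected Richardson steps,
\begin{align*}
    \norm{\bfu^{(m)} - \bfu(\cdot,\bfy,\bfobs)}_{A_\bfy} \le (1-\omega)^m \norm{\bfobs - \bfu(\cdot,\bfy,\bfobs)}_{A_\bfy}.
\end{align*}
So the first step is to choose the number of iterations $m$ as a function of the target accuracy: to make the right-hand factor $(1-\omega)^m$ smaller than some $\tilde\varepsilon$, it suffices to take $m \ge \lceil \log(\tilde\varepsilon^{-1})/\log((1-\omega)^{-1})\rceil = \lceil \log(\tilde\varepsilon^{-1}) \log(1-(\ub\chl)^{-1})^{-1}\rceil$ up to the harmless squaring of the $\chl$ in $\omega$ (I would double-check whether the paper's stated $\chl$ versus $\chl^2$ discrepancy needs a relabelling of constants or is absorbed). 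This already matches the claimed parameter bound in (ii), since by \cref{theorem: CNN for richardson} the CNN approximating $m$ iterations has $M(\Psi)\le Cm$.

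The second step is to control the initial error $\norm{\bfobs - \bfu(\cdot,\bfy,\bfobs)}_{A_\bfy}$ in terms of the data norms appearing in the statement. The natural approach is a stability/a-priori bound for the obstacle problem: test \cref{problem: variational obstacle discretized} (or its discrete coefficient form) with $v_h$ chosen appropriately — e.g. using $\obs_h$ as a competitor — to get $\norm{u_h - \obs_h}_{A_\bfy}^2 \lesssim \dupair{f}{u_h-\obs_h} + \ldots$, then apply Cauchy--Schwarz and the dual norm definition \eqref{eq: dual norm} together with the norm equivalences \eqref{eq:norm equivalence A and H1} to arrive at $\norm{\bfobs - \bfu}_{A_\bfy} \lesssim C_{H^1}\norm{f}_{H^{-1}} + \norm{\bfobs}_{A_\bfy}$ or similar. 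Tracking the equivalence constants $c_{H^1}, C_{H^1}$ carefully through the conversions between $\norm{\cdot}_{A_\bfy}$ and $\norm{\cdot}_{H^1}$ is what produces the precise prefactor $C^2_{H^1}\norm{f}_{H^{-1}} + \frac{C_{H^1}}{c_{H^1}}\norm{\bfobs}_{H^1}$ in part (i).

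The final step is to assemble the pieces via the triangle inequality:
\begin{align*}
    \norm{\Psi(\bfobs,\bfkappa_\bfy,\bff,\bfobs) - \bfu(\cdot,\bfy,\bfobs)}_{H^1}
    \le \norm{\Psi(\bfobs,\bfkappa_\bfy,\bff,\bfobs) - \bfu^{(m)}}_{H^1} + \norm{\bfu^{(m)} - \bfu(\cdot,\bfy,\bfobs)}_{H^1},
\end{align*}
bounding the first term by the CNN approximation error from \cref{theorem: CNN for richardson} (which can be made as small as we like, e.g. comparable to the second term, at the cost of only a constant factor in $M(\Psi)$) and the second by $(1-\omega)^m$ times the stability bound, after converting from $\norm{\cdot}_{A_\bfy}$ to $\norm{\cdot}_{H^1}$. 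Setting $\tilde\varepsilon \sim \varepsilon$ then yields both claims. I expect the main obstacle to be the bookkeeping of constants — in particular deriving the clean a-priori bound on $\norm{\bfobs - \bfu}_{A_\bfy}$ with exactly the right dependence on $\norm{f}_{H^{-1}}$ and $\norm{\bfobs}_{H^1}$, and making sure the $\varepsilon$ absorbed into the CNN approximation error (which needs a fixed $\omega,M$) does not interact badly with the iteration count $m$, since \cref{theorem: CNN for richardson} holds for arbitrary but fixed $\omega$ and input bound $M$.
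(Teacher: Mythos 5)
Your proposal is correct and takes essentially the same approach as the paper: fix $\omega$ via \cref{lemma:gamma}, iterate the contraction estimate \eqref{equation: Richardson decay} from $\bfu^{(0)}=\bfobs$, bound the initial error $\norm{\bfobs-\bfu}_{A_\bfy}$ by testing the variational inequality with the competitor $\obs_h$ and invoking the dual norm and the equivalences \eqref{eq:norm equivalence A and H1}, choose $m$ logarithmically in $\varepsilon$, and split via the triangle inequality between the CNN error from \cref{theorem: CNN for richardson} and the $m$-step Richardson error. Your observation about the $\chl$ versus $\chl^2$ mismatch between the statement and proof of \cref{lemma:gamma} is also a genuine inconsistency in the paper (the proof of \cref{lemma:gamma} only establishes $\sigma_{\max}(A_\bfy)\le\ub\chl^2$, whereas the corollary's proof then picks $\omega=(\ub\chl)^{-1}$), so your choice $\omega=(\ub\chl^2)^{-1}$ is actually the one justified by the lemma's proof.
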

The proof can be found in~\cref{section: proof of CNN for obstacle}.
The upper bound for number of parameters is grid-dependent through the constant $\chl$. In~\cite{JMLR:v24:23-0421} the dependence on the size $h$ is circumvented by approximating a multigrid algorithm based on the Richardson iteration, which inspires a specific CNN architecture. Here, a fitting multigrid is approximated as well.
The approximation of the Richardson iteration in~\cref{theorem: CNN for richardson} with the approximation of the prolongation and monotone restriction operator shows that the whole~\cref{alg: vcmr} can be approximated by a U-net based CNN similar to the construction in~\cite{JMLR:v24:23-0421}. The main differences lie in the additional projection in each step of the Richardson iteration and in the monotone restriction operator. The approximation of the monotone restriction operator is proven in the appendix in~\cref{theorem: CNN for monotone restriction}.

\begin{remark}[CNN for multigrid algorithm]\label{theorem: CNN for V-Cycle}
    Let $V_h \subset H_0^1(D)$ be the P1 FE space on a uniform square mesh. Then there exists a constant $C > 0$ such that for any $M, \varepsilon > 0$ and $k,m,\ell \in \mathbb{N}$ there exists a CNN
    $\Psi : \mathbb{R}^{4\times \dim V_h} \to \mathbb{R}^{\dim V_h}$ with
    \begin{enumerate}[label=(\roman*)]
        \item $\norm{\Psi(\bfu_0, \bfkappa,\bff, \bfobs) - \vcmrm(\bfu_0, \bfkappa,\bff, \bfobs)}_{H^1(D)} \leq \varepsilon$ for all $\bfu_0, \bfkappa,\bff, \bfobs \in [-M,M]^{\dim V_h}$
        \item number of weights bounded by $M(\Psi) \leq C m \ell$.
    \end{enumerate}
\end{remark}
The proof can be found in \cref{section: proof of CNN for obstacle}. The analysis works similarly to the analysis carried out in ~\cite{JMLR:v24:23-0421}. The main difference lies in the additional application of the maximum of the solution approximation and the obstacle in every step of the projected Richardson iteration and in the restriction operator.
The architecture of the CNN used in the numerical experiments in this work is inspired by the constructive proof of~\cref{theorem: CNN for V-Cycle}.
A description of the architecture can also be found in~\cite{JMLR:v24:23-0421}.
To the best of our knowledge, a convergence speed-up of the $\vcmrm$ compared to the projected Richardson iteration has not been shown theoretically despite a considerable numerical speed up.
Therefore, another approximation of the solution based on this algorithm is not included here.

\subsection{Multi-level advantage}\label{section: multilevel}
Multi-level machine learning algorithms include training neural networks on efficient decompositions of the data to improve performance. 
In~\cite{LYE_MISHRA_MOLINARO_2021} the generalization error for such a decomposition was analyzed.
With the grids introduced in~\cref{section:geometricMultigrid}, lower-level models approximate a coarse grid solution and higher-level models approximate high fidelity corrections.
The implemented network introduced in~\cite{JMLR:v24:23-0421} is based on this multi-level decomposition.
Here, a discretized solution operator $u_L:\Gamma\to V_L$ of the parametric obstacle problem is decomposed into components $u_\ell:\Gamma \to  V_\ell$ on the individual spaces $V_1,\dots,V_L$ by 
\begin{align*}
    u_L = \sum_{\ell=1}^L u_{\ell} - u_{\ell-1} = \sum_{\ell=1}^L v_\ell,
\end{align*}
where $u_{0}$ is set to zero and $v_\ell:\Gamma\to V_\ell$ are additive corrections on each level. This decomposition is visualized in~\Cref{fig: multilevel}. The individual parts of the solution are approximated separately by CNNs.
First, a normalized solution on a coarse grid is approximated with a CNN $\Psi_1$. Then, individual networks $\Psi_2\dots,\Psi_L$ are trained to approximate normalized corrections with some normalization constant $b_\ell>0$ on finer grids with some accuracy $\varepsilon_\ell>0$ with 
\begin{align*}
    \norm{\Psi_\ell - \frac{v_\ell}{b_\ell}}\leq \varepsilon_\ell.
\end{align*}
Then, the weighted sum $\Psi \coloneqq \sum_{\ell=1}^L b_\ell\Psi_\ell$ approximates the whole solution by 
\begin{align*}
    \norm{\Psi - u^L} = \norm{\sum_{\ell=1}^L b_\ell \left(\Psi_\ell - \frac{v_\ell}{b_\ell}\right)} \leq \sum_{\ell=1}^L b_\ell \norm{\Psi_\ell - \frac{v_\ell}{b_\ell}}.
\end{align*}
If the normalization constant is chosen as an operator norm of $v_\ell$ and setting $b_\ell\coloneqq \norm{v_\ell}_{L^p(\Gamma,L^2(D))}$, the inequalities \cite[Equation 6,10, Equation 7.6]{Bierig2015ConvergenceAO} imply that
\begin{align*}
    \norm{\Psi - u^L} = \sum_{\ell=1}^L \varepsilon_\ell h_\ell(\norm{f}_{L^2(\Gamma,L^2(D))} + \norm{\obs}_{L^2(\Gamma,H^2(D))}) \leq C_{f,\obs} \sum_{\ell=1}^L\varepsilon_\ell 2^{-\ell}
\end{align*}
holds, when considering that the multi-level decomposition applied in the given architecture yields $h_\ell \leq C2^{-\ell}$ for the maximal side length $h_\ell$ of triangles in $\mathcal{T}_\ell$. To achieve an overall accuracy $C_{f,\obs}\sum_{\ell=1}^L \varepsilon_\ell 2^{-\ell}\leq \varepsilon$, the accuracy of each sub-model on each level only has to satisfy
\begin{align}\label{equation:multilevel-error}
    \varepsilon_\ell \leq \frac{\varepsilon 2^\ell}{LC_{f,\obs}}.
\end{align}
Since functions in high fidelity spaces have more degrees of freedom, they are in general more difficult to approximate. The advantage of the decomposition then lies in the fact that the approximation in higher dimensional spaces is allowed to be exponentially worse. This can be either translated to less trainable parameters of the CNN or fewer expensive high fidelity solutions for training.

\section{Numerical Experiments}
For the numerical tests, the U-net based architecture described in~\cite{JMLR:v24:23-0421} and supported by the result in~\cref{theorem: CNN for V-Cycle} was used. Here, $L=7$ nested FE spaces are considered. A first CNN then approximates the solution of the obstacle problem on the coarse grid FE space $V_1$. Further individual networks then approximate the corrections of the solution on finer meshes in $V_2,\dots, V_7$ as visualized in \Cref{fig: multilevel} for the first four spaces.
\begin{figure}
\hspace{-.5cm}
    \begin{tikzpicture}
        \def\imgsize{3.2cm}  
        \def\xgap{3.4}  
    
        \node (A1) at (0, 2.8) {\includegraphics[width=\imgsize]{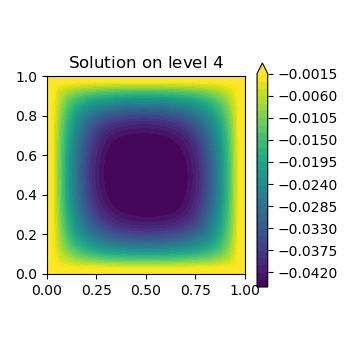}};
        \node (B1) at (\xgap, 2.8) {\includegraphics[width=\imgsize]{{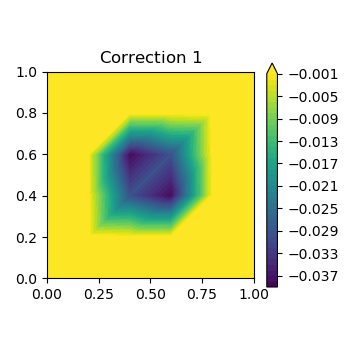}}};
        \node (C1) at (2*\xgap, 2.8) {\includegraphics[width=\imgsize]{{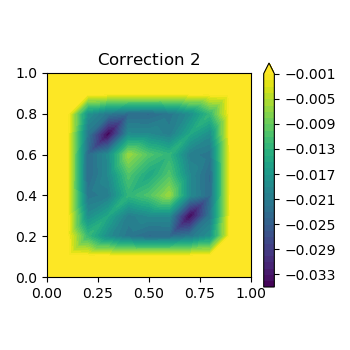}}};
        \node (D1) at (3*\xgap, 2.8) {\includegraphics[width=\imgsize]{{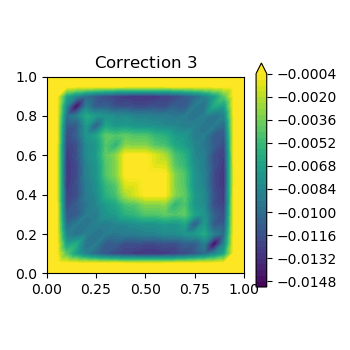}}};
        \node (E1) at (4*\xgap, 2.8) {\includegraphics[width=\imgsize]{{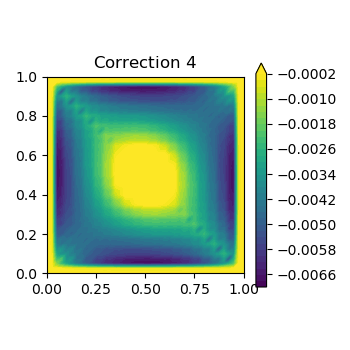}}};
    
        \node at (0.5*\xgap, 2.7) {\Large $=$};
        \node at (1.5*\xgap, 2.7) {\Large $+$};
        \node at (2.5*\xgap, 2.7) {\Large $+$};
        \node at (3.5*\xgap, 2.7) {\Large $+$};
    
        \node (A2) at (0, 0) {\includegraphics[width=\imgsize]{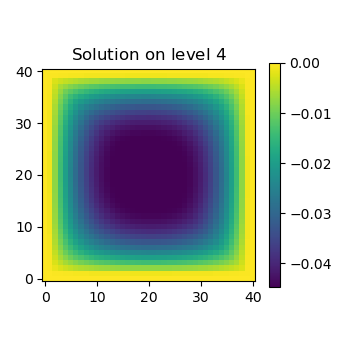}};
        \node (B2) at (\xgap, 0) {\includegraphics[width=\imgsize]{{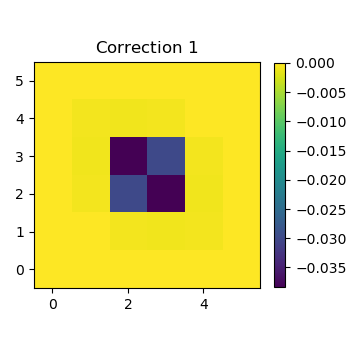}}};
        \node (C2) at (2*\xgap, 0) {\includegraphics[width=\imgsize]{{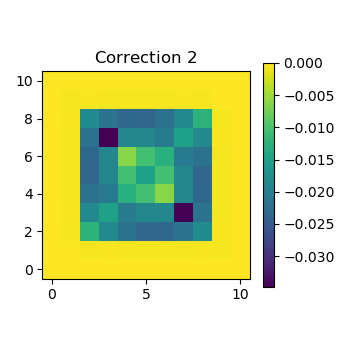}}};
        \node (D2) at (3*\xgap, 0) {\includegraphics[width=\imgsize]{{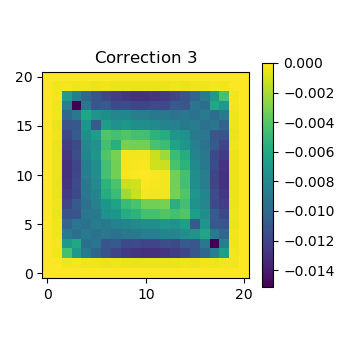}}};
        \node (E2) at (4*\xgap, 0) {\includegraphics[width=\imgsize]{{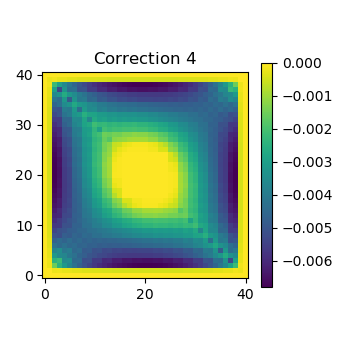}}};
    
    \end{tikzpicture}
    \caption{The solution to the obstacle problem in $V_4$ in the first row on the left-hand side can be decomposed into corrections in $V_1,V_2,V_3,V_4$ of decreasing values on different grids as seen in the first row. The sum of the corrections equals the full solution. The FE coefficients of the solution in $\mathbb{R}^{N_4}$ and the corrections in $\mathbb{R}^{N_1},\mathbb{R}^{N_2},\mathbb{R}^{N_3},\mathbb{R}^{N_4}$ are visualized in images underneath each function.}
    \label{fig: multilevel}
\end{figure}

The following test cases are considered.

\begin{enumerate}
    \item \textbf{Deterministic obstacle.} To solve \cref{problem: variational obstacle discretized} for a constant obstacle $\obs(x,\bfy)= -0.035$ and $f(x) = 1$ for all $x\in D = [0,1]^2$, a sample $\bfy\in\Gamma$ was drawn. As e.g. in~\cite{Eigel_2019, JMLR:v24:23-0421}, the coefficient field is assumed to have the representation
    \begin{align*}
        \kappa(x,\bfy) \coloneqq a_0(x) + \sum_{m=1}^p \bfy_m a_m(x),
    \end{align*}
    where $a_m(x) \coloneqq m^{-2} \sin(\lfloor\frac{m+2}{2}\rfloor\pi x_1)\sin(\lceil\frac{m+2}{2}\rceil\pi x_2)$ and $y$ is chosen uniformly in $\Gamma = [-1,1]^p$. A realization of the coefficient, the obstacle and the corresponding solution as well as the contact domain are visualized in~\Cref{fig:obstacle}.
    \item \textbf{Stochastic constant obstacle.} The problem above is now implemented with the additional variation of the obstacle with $\obs$ chosen as a constant function with value distributed uniformly in $[-0.045, -0.025]$. Since the value of the obstacle is the $p$-th entry of the $\bfy$, the sum above defining $\kappa$ only goes to $p-1$.
    \item \textbf{Rough surface obstacle.} Similar to \cite{Bierig2015ConvergenceAO}, problem \cref{problem: variational obstacle discretized} is solved for a constant coefficient $\kappa \equiv 1$. The domain is chosen as $D = [0,1]^2$ and the constant forcing is set to $f\equiv 25$. The obstacle, which is used to model rough surfaces~\cite{Persson_2005}, is given by 
    \begin{align*}
        \obs(x,\bfy) = \sum_{q} B_q(H) \cos(q\cdot x + \bfy_q),
    \end{align*}
    where the sum is taken over all $q$ with components, which are multiples of $\pi$ such that $1\leq\norm{q}_2\leq 26$. The phase shifts $\bfy_q\sim \mathcal{U}([0,2\pi])$ and the Hurst exponent $H\sim\mathcal{U}([0,1])$ are mutually independent and the amplitudes are defined as $B_q(H) = \pi (2\pi\max(\norm{q}_2, 10))^{-(H+1)}/25$. The parameter vector $\bfy$ is set to the collection of $H$ and $\bfy_q$ for all considered $q$. A realization of the rough surface and corresponding solution can be found in \Cref{fig: rough obstacle}.
\end{enumerate}
\begin{figure}
    \centering
    \includegraphics[width=0.33\linewidth]{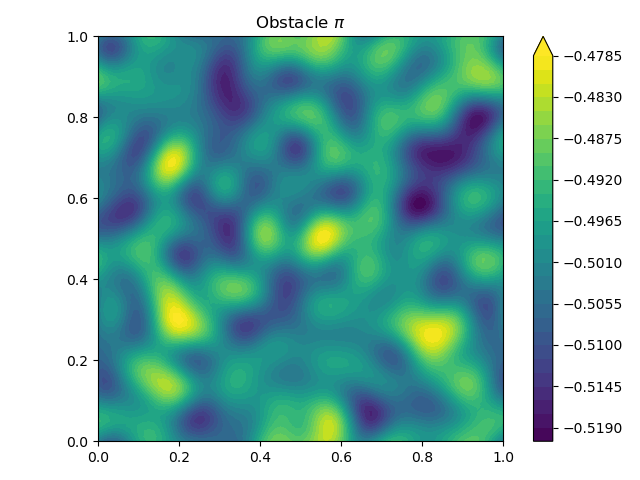}
    \includegraphics[width=0.34\linewidth]{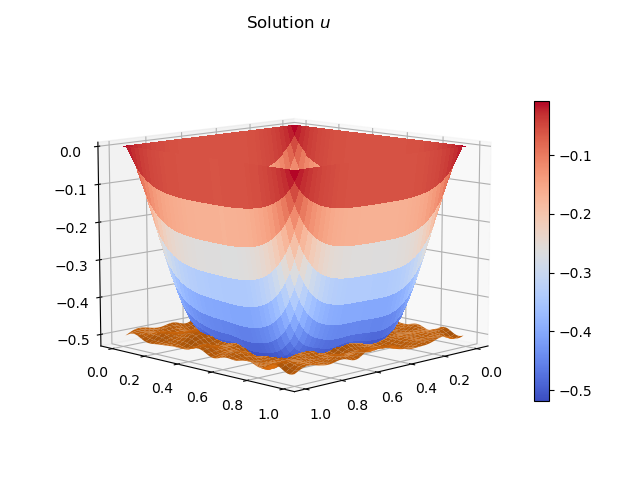}
    \includegraphics[width=0.32\linewidth]{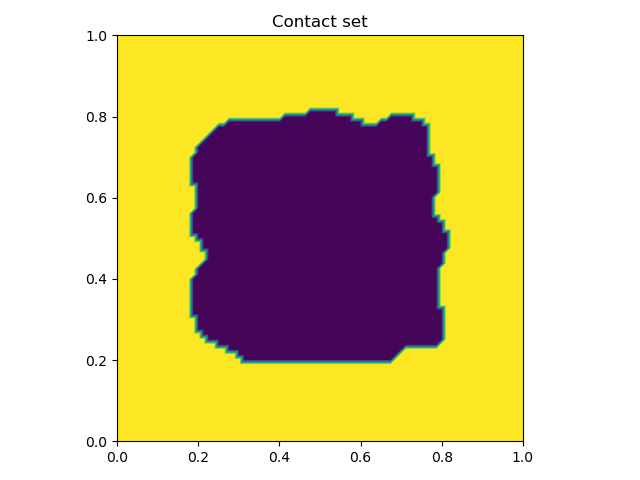}
    \caption{The first image depicts a realization of the rough surface model~\cite{Persson_2005}. In the second and third images, the corresponding solution of the obstacle problem and the resulting contact set are shown, where the solution is equal to the obstacle. The contact set is colored in purple.}
    \label{fig: rough obstacle}
\end{figure}

In each setting the problem was solved on $L=7$ levels with function spaces of size $ (5\times 2^{\ell-1} + 1) \times (5\times 2^{\ell-1} + 1)$ for $\ell=1,\dots,L$. The networks was trained with $10.000$ training samples and $1024$ validation samples. The number of parameters of the network on each level were selected as shown in~\cref{tab:NN_arch}. Note that the network architecture is chosen to approximate a coarse grid solution on the first level and finer grid corrections on higher levels. On finer grids, more FE coefficients need to be estimated than on coarser grids, but the contributions of the finer corrections to the full solutions are smaller than later corrective terms, see~\cref{section: multilevel}. Since the exponentially increasing number of FE coefficients to approximate on each level is countered by the exponential increase of required accuracy as derived in~\cref{equation:multilevel-error}, approximately the same number of parameters on each level was chosen for the CNN. Furthermore, note that on lower levels $\ell$, the $\vcmrm$ has fewer recursive calls than on higher levels. Since one U-net with parameters on every considered level corresponds to one full call of the $\vcmrm$, U-nets for lower levels need less parameters for a full downsampling and upsampling scheme. Therefore, on lower levels more U-nets are applied assuming the same number of trainable parameters as on higher levels.
\begin{table}[]
    \centering
    \caption{Collection of parameters of the used CNN architecture CNN, including the network output dimensions of the FE solutions of the obstacle problem on different levels. For the complete solution, all outputs and therefore all parameters on all levels are needed (and summed up). Moreover, the number of parameters used in the form of concatenated U-nets on each level is displayed.\newline}
    \begin{tabular}{cccccccc}
    \toprule
         level & $1$& $2$& $3$& $4$& $5$& $6$& $7$ \\
         \midrule
         \# params &  $1073248$ & $1069984$ & $1032992$ & $1014496$ & $838048$ & $996000$ & $1153952$\\
         \# U-nets & $11$ & $6$ & $4$ & $3$ & $2$ & $2$ & $2$\\
         \bottomrule
    \end{tabular}
    \label{tab:NN_arch}
\end{table}

For each test case, the \emph{mean relative $H^1(D)$} error and \emph{mean relative $L^2(D)$} error are calculated with respect to a finite element solution on the same grid as the neural network output and with respect to a (reference) fine grid solution. For $N=1024$ test samples $\bfy_1,\dots, \bfy_N \in \Gamma$, let $u_1,\dots,u_N$ be the output of the neural network in $V_L$ and let $v_L:\Gamma \to V_L$ be the finite element solution operator on the same grid. Furthermore, let $v_{\text{ref}}:\Gamma \to H^1$ be the finite element solution operator on a grid two times finer than the grid of $v_L$. Then, define the same grid network error and the reference error by 
\begin{align*}
    \mathcal{E}_{\text{MR}_*} = \sqrt{  \frac{\sum_{i=1}^N\norm{u_i - v_L(\bfy_i)}^2_*}{\sum_{i=1}^N\norm{v_L(\bfy_i)}^2_*}}, \qquad \mathcal{E}_{\text{MR}_*}^{\text{ref}} = \sqrt{  \frac{\sum_{i=1}^N\norm{u_i - v_{\text{ref}}(\bfy_i)}^2_*}{\sum_{i=1}^N\norm{v_{\text{ref}}(\bfy_i)}_*^2}}
\end{align*}
with $*\in\{H^1,L^2\}$.

\begin{table}
\centering
 \caption{The mean relative $H^1$ error is reported for the error of the network to Galerkin solutions on the same grid $\mathcal{E}_{\text{MR}{H^1}}$ and with respect to the reference Galerkin solutions on a twice uniformly refined grid $\mathcal{E}_{\text{MR}{H^1}}^{\text{ref}}$.\\}
 \begin{tabular}{c c c c} 
 \toprule
 problem & {parameter dimension $p$} & \quad$\mathcal{E}_{\text{MR}{H^1}}$ \quad & \quad$\mathcal{E}_{\text{MR}{H^1}}^{\text{ref}}$ \\
 \midrule
 deterministic obstacle & $10$ & $4.76 \times 10^{-4} \pm 2\times 10^{-4}$ & $6.82 \times 10^{-3} \pm 1\times 10^{-5}$\\
 & $50$ & $4.35 \times 10^{-4} \pm 1\times 10^{-4}$ & $6.82\times 10^{-3}\pm 8\times 10^{-6}$ \\
 \midrule
 stochastic obstacle & $11$ & $1.32\times 10^{-3} \pm 4\times 10^{-4}$ & $6.92\times 10^{-3} \pm 7\times 10^{-5}$\\
 & $51$ & $1.91 \times 10^{-3} \pm 1 \times 10^{-3}$ & $7.17 \times 10^{-3} \pm 4 \times 10^{-4}$\\ 
 \midrule
 rough surface & $100$ & $2.13\times10^{-3}\pm6\times10^{-4}$& $8.59\times10^{-3}\pm6\times10^{-4}$\\
 & $220$ & $2.07\times10^{-3}\pm2\times10^{-4}$ & $9.04\times 10^{-3}\pm 3\times 10^{-5}$ \\
 \bottomrule
 \end{tabular}
 \label{table:h1}
\end{table}

\begin{table}
\centering
 \caption{The mean relative $L^2$ error is reported for the error of the network to Galerkin solutions on the same grid $\mathcal{E}_{\text{MR}{L^2}}$ and to reference Galerkin solutions on a twice uniformly refined grid $\mathcal{E}_{\text{MR}{L^2}}^{\text{ref}}$.\\}
 \begin{tabular}{cccc} 
 \toprule
 problem & {parameter dimension $p$} & \quad $\mathcal{E}_{\text{MR}{L^2}}$ \quad & \quad $\mathcal{E}_{\text{MR}{L^2}}^{\text{ref}}$\\
 \midrule
 deterministic obstacle & $10$ & $1.85 \times 10^{-4} \pm 5\times 10^{-5}$ & $1.96 \times 10^{-4} \pm 5\times 10^{-5}$\\
 & 50 & $1.41 \times 10^{-4} \pm 4\times 10^{-5}$ & $1.52 \times 10^{-4} \pm 4\times 10^{-5}$ \\
 \midrule
 stochastic obstacle & $11$ & $6.39 \times 10^{-4} \pm 2 \times 10^{-4}$& $6.43 \times 10^{-4} \pm 2 \times 10^{-4}$\\ 
 & $51$ & $1.01\times 10^{-3} \pm 7 \times 10^{-4}$ & $1.01\times 10^{-3} \pm 6 \times 10^{-4}$ \\
 \midrule
 rough surface & $100$ & $7.00\times10^{-4}\pm4\times10^{-4}$ & $7.09\times10^{-4}\pm4\times10^{-4}$ \\
 & $220$ & $5.40 \times 10^{-4}\pm 3\times10^{-5}$ & $5.47 \pm 10^{-4}\pm 3\times10^{-5}$ \\
 \bottomrule
 \end{tabular}
 \label{table:l2}
\end{table}

The training of the CNN was repeated $5$ times and the mean results as well as the variances over the training procedures are recorded in \cref{table:h1} and \cref{table:l2}. \cref{table:h1} shows the the errors in the $H^1$ norm. It can be observed that in all test cases the network error is significantly smaller than the reference error, in some cases it is even a magnitude smaller. Therefore, the overall (prediction or approximation) error could only efficiently be reduced by refining the grid.
Furthermore, note that the error does not increase with respect to the parameter dimension $p$ despite the problem becoming more challenging.

For the $L^2$ error in \cref{table:l2}, the domination of the FE approximation error is not as pronounced. This could be due to the fact that the network is trained with respect to the $H^1$ error. The difference in error convergence can also be seen in \Cref{fig: decay over dofs}, where the decay of the $H^1$ and $L^2$ errors for the obstacle problem with variable obstacle and parameter dimension $11$ over the degrees of freedom is visualized.
Both the error of the network and the reference solutions ($\norm{u_i - v_{\text{ref}}(\bfy_i)}_*$) and the error of the FE solution on the same grids as the network output and the reference solution ($\norm{v_L(\bfy_i) - v_{\text{ref}}(\bfy_i)}_*$) are considered. While the $H^1$ error achieves the same convergence rate as the true Galerkin solution on each refined grid, the $L^2$ error suffers from a bias in the last few levels. 
\begin{figure}
    \centering
    \begin{subfigure}{0.4\textwidth}
        \includegraphics[width=\textwidth]{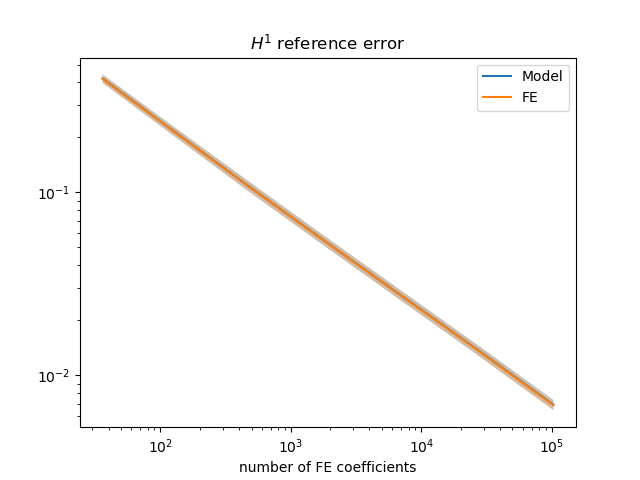}
    \end{subfigure}
    \begin{subfigure}{0.4\textwidth}
        \includegraphics[width=\textwidth]{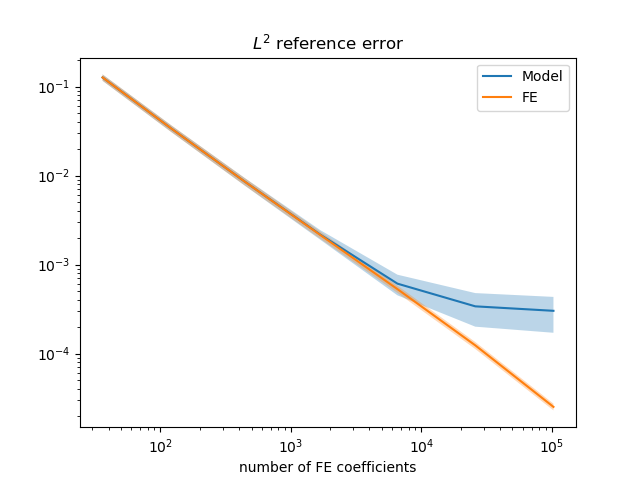}
    \end{subfigure}
    \caption{Error plots for the stochastic constant obstacle problem with parameter dimension $p=11$ are shown for a trained CNN. Errors of the CNN output compared to a reference solution are plotted in blue and errors of the finite element solution on the same grid as the CNN output to the reference solution are plotted in orange. A line indicates the mean of the relative errors over a test set and the area visualizes its variance. The left plot shows $H^1$ errors and the right plot shows $L^2$ errors.}
    \label{fig: decay over dofs}
\end{figure}

The error contribution of the individual levels is illustrated in \Cref{fig: partial errors}. The mean relative $H^1$ and $L^2$ errors are visualized for the approximation of the correction in each discrete FE space. The approximated corrections are visualized in~\Cref{fig: multilevel}. It can be seen that a high relative accuracy is achieved for the coefficients of corrections on high fidelity grids despite approximately the same number of parameters  being used for each sub-network. This underlines the effectiveness of the multigrid decomposition.
\begin{figure}
    \centering
    \begin{subfigure}{0.4\textwidth}
        \includegraphics[width=\linewidth]{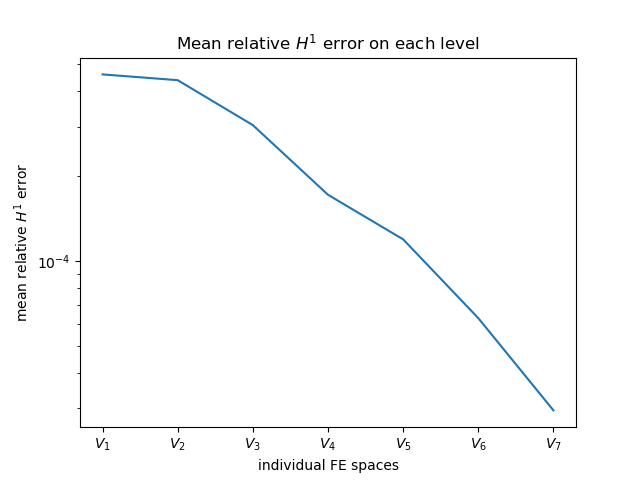}
    \end{subfigure}
    \begin{subfigure}{0.4\textwidth}
        \includegraphics[width=\linewidth]{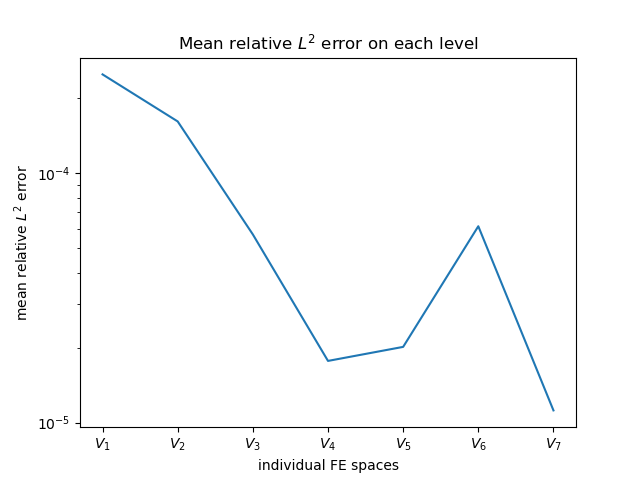}
    \end{subfigure}
    \caption{Mean relative $H^1$ (left) and $L^2$ (right) errors for a CNN trained for the stochastic constant obstacle problem with parameter dimension $p=11$ are shown. The errors are plotted over the outputs of the network on each level in the multi-level decomposition. It can be seen that the multigrid corrections on fine grids are well approximated.}
    \label{fig: partial errors}
\end{figure}

\section{Conclusion}
A multi-level NN architecture was analyzed in the context of a parametric obstacle problem, a highly nonlinear and highly challenging extension to the studies in \cite{JMLR:v24:23-0421}.
Here, it could be shown analytically and numerically that their proposed architecture is well suited to solve the considered parametric obstacle problem.
An expressivity result for the architecture applied to the obstacle problem was derived, stating an upper bound on the number of trainable parameters, which depends only logarithmically on the required accuracy. 
The architecture was successfully applied to the parametric obstacle problem for multiple parameter-setups with different dimensions, with variable coefficient and with constant as well as variable and rough obstacles.

\section*{Acknowledgments}
ME \& JS acknowledge funding from the Deutsche Forschungsgemeinschaft (DFG, German Research Foundation) in
the priority programme SPP 2298 ”Theoretical Foundations of Deep Learning”. ME acknowledges support by the ANR-DFG project COFNET: Compositional functions networks - adaptive learning for high-dimensional
approximation and uncertainty quantification. 
This study does not have any conflicts to disclose.

\bibliographystyle{abbrv}
\bibliography{lib}

\appendix
\newpage
\section{Proofs of expressivity theory}\label{section: proof of CNN for obstacle}

\begin{proof}[Proof of \cref{theorem:omega>gamma}]
    First, assume some $\lambda: D \to \mathbb{R}$ with $\lambda = 0$ on $\partial D$ that is continuous and Lebesgue-almost-everywhere differentiable.
    If $\lambda$ is not constant zero this implies that there exists a point $x_0\in D$ and $\varepsilon>0$ such that $\lambda$ is differentiable and $\nabla \lambda \neq 0$ on an $\varepsilon$ neighborhood of $x_0$ denoted b $U_\varepsilon(x_0)$.
    Then, due to $\kappa(\cdot, \bfy) >0$ everywhere, we obtain that
    \begin{align*}
        a_{\bfy} (\lambda, \lambda) = \int_{D} \kappa(\cdot, \bfy) \scpr{\nabla \lambda, \nabla \lambda} \mathrm{d}x \geq \int_{U_\varepsilon(x_0)} \kappa(\cdot, \bfy) \scpr{ \nabla \lambda, \nabla \lambda} \mathrm{d}x >0.
    \end{align*}
    Therefore, for any nonzero $\bfw \in\mathbb{R}^{N}$ 
    we deduce that
    \begin{align*}
        \bfw^\intercal A_\bfy \bfw = a_{\bfy, k}\left(\sum_{i=1}^{N}\bfw_i\lambda_i, \sum_{i=1}^{N}\bfw_i\lambda_i\right) >0
    \end{align*}
    and hence $A_\bfy$ is positive definite.
    Denote the eigenvalues and eigenvectors of $A_\bfy$ by $\sigma_i, \bfv^i$ for $i=1,\dots,N$ with
    \begin{align*}
        A_\bfy \bfv^i &= \sigma_i \bfv^i \quad\quad\text{ such that }\\
        \delta_{i,j} &= \scpr{\bfv^i,\bfv^j}_{\ell^2} \text{ for all } i,j=1,\dots,N.
    \end{align*}
    Furthermore, for $\bfw\in \mathbb{R}^{N}$, let 
    \begin{align*}
        J_\bfy\bfw &\coloneqq (I - \omega_\bfy A_\bfy)\bfw.
    \end{align*}
    Then, with $\bfw = \sum_{i=1}^{N} c_i\bfv^i$ 
    \begin{align*}
        J_\bfy\bfw = \sum_{i=1}^{N} c_i (I - \omega_\bfy A_\bfy)\bfv^i = \sum_{i=1}^{N} c_i (1 - \omega_\bfy \sigma_i)\bfv^i.
    \end{align*}
    Additionally,
    \begin{align*}
        |\bfw|^2 \coloneqq \sum_{i=1}^{N} \sigma_i (1-\sigma_i\omega_\bfy) c_i^2
    \end{align*}
    defines a semi-norm due to $0<\omega_\bfy \leq \sigma_{\max}(A_\bfy)^{-1}$.
    It then follows that
        \begin{align*}
            |\bfw|^2 &= \sum_{i,j=1}^{N} (1-\sigma_i\omega_\bfy) \sigma_i\scpr{\bfv^i,\bfv^j}_{\ell^2} c_i c_j 
            = \scpr{ \sum_{i=1}^{N} c_i (1-\sigma_i\omega_\bfy)\bfv^i, \sum_{j=1}^{N} c_j\bfv^j }_{A_\bfy}
            = \scpr{J_\bfy\bfw, \bfw}_{A_\bfy},\\
            \norm{\bfw}_{A_\bfy}^2 &= \scpr{A_\bfy\bfw,\bfw}_{\ell^2} = \sum_{i,j=1}^{N} \sigma_i c_i c_j \scpr{\bfv^i,\bfv^j}_{\ell^2} = \sum_{i=1}^{N} c_i^2 \sigma_i,\\
            |\bfw|^2 &= \sum_{i=1}^{N} \sigma_i c_i^2 - \sum_{i=1}^{N} \sigma_i\omega_\bfy c_i^2 { = (1-\omega_\bfy)}\sum_{i=1}^{N} \sigma_i c_i^2 = { (1-\omega_\bfy)}\norm{\bfw}_{A_\bfy}^2.
        \end{align*}
        With the H\"older inequality,
        \begin{align*}
            \norm{J_\bfy\bfw}_{A_\bfy}^2 &= \sum_{i=1}^{N}\sigma_i (c_i(1-\sigma_i\omega_\bfy))^2 = \sum_{i=1}^{N} (\sigma_i^{1/3} |c_i|^{2/3}) (\sigma_i^{2/3} |c_i|^{4/3}(1-\sigma_i\omega_\bfy)^2)\\
            &\leq \left(\sum_{i=1}^{N} (\sigma_i^{1/3} |c_i|^{2/3})^3\right)^{1/3} \left(\sum_{i=1}^{N}(\sigma_i^{2/3} |c_i|^{4/3}(1-\sigma_i\omega_\bfy)^2)^{3/2}\right)^{2/3}\\
            &= \left(\sum_{i=1}^{N} \sigma_i |c_i|^{2}\right)^{1/3} \left(\sum_{i=1}^{N}\sigma_i |c_i|^{2}(1-\sigma_i\omega_\bfy)^3\right)^{2/3}.
        \end{align*}
        This yields the result by estimating
        \begin{align*}
            \norm{J_\bfy\bfw}_{A_\bfy}^3 &= \left(\norm{J_\bfy\bfw}_{A_\bfy}^2\right)^{3/2}\\
            &\leq \left(\sum_{i=1}^{N} \sigma_i c_i^{2}\right)^{1/2} \left(\sum_{i=1}^{N}\sigma_i c_i^{2}(1-\sigma_i\omega_\bfy)^3\right)\\
            &= \norm{\bfw}_{A_\bfy} |J_\bfy\bfw|^2\\
            &= \norm{\bfw}_{A_\bfy} { (1-\omega_\bfy)}\norm{J_\bfy\bfw}_{A_\bfy}^2
        \end{align*}
        and dividing by $\norm{J_\bfy\bfw}_{A_\bfy}^2$.
\end{proof}

\begin{lemma}[Maxima approximation]\label{lemma: CNN max approximation}
Let $\tau:\mathbb{R}\to \mathbb{R}$ satisfy~\cref{ass: sigma} $\varepsilon, M >0$ and $n\in\mathbb{N}$. There exist convolutional $(1,1)$-kernels $K_1\in\mathbb{R}^{2\times 2\times 1\times 1}$ and $K_2\in\mathbb{R}^{2\times 1\times 1\times 1}$ and biases $B_1\in\mathbb{R}^2, B_2\in\mathbb{R}$ such that 
\begin{align*}
    \norm{(\psi_{K_2,B_2} \circ \tau \circ \psi_{K_1,B_1})(\bfu,\bfobs) - \max\{\bfu,\bfobs\}}_{L^\infty([-M,M]^{2 \times n \times n})} \leq\varepsilon,
\end{align*}
where the maximum is defined component-wise.    
\end{lemma}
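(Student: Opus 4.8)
The plan is to realize the identity $\max\{a,b\} = b + \max\{a-b,0\}$ at the level of convolutional layers and then invoke the ReLU-approximation property in \cref{ass: sigma}(2). First I would build the pre-activation affine map $\psi_{K_1,B_1}$ as a $(1,1)$-convolution acting channel-wise: on input $(\bfu,\bfobs)\in\mathbb{R}^{2\times n\times n}$ it should output a two-channel tensor whose first channel is $\rho_1(\bfu-\bfobs)$ and whose second channel simply carries $\bfobs$ forward unchanged, where $\rho_1$ is the affine map from \cref{ass: sigma}(2) chosen for accuracy parameter $\varepsilon' \coloneqq \varepsilon$. Concretely, $K_1\in\mathbb{R}^{2\times 2\times 1\times 1}$ encodes the linear parts (coefficients $\pm 1$ scaled by the linear coefficient of $\rho_1$ in channel one, and $(0,1)$ in channel two) and $B_1\in\mathbb{R}^2$ carries the constant term of $\rho_1$ in its first entry and $0$ in its second.

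Next I would apply $\tau$ component-wise, so after activation channel one holds $\tau(\rho_1(\bfu-\bfobs))$ and channel two holds $\tau(\bfobs)$. The output layer $\psi_{K_2,B_2}$ with $K_2\in\mathbb{R}^{2\times 1\times 1\times 1}$ and $B_2\in\mathbb{R}$ is then chosen so that it applies $\rho_2$ to channel one (recovering $(\rho_2\circ\tau\circ\rho_1)(\bfu-\bfobs)\approx\max\{\bfu-\bfobs,0\}$) and adds the raw value $\bfobs$. The only subtlety here is that the second channel has been passed through $\tau$, so I cannot directly add $\bfobs$; the standard fix is to make $\psi_{K_1,B_1}$ on the second channel an affine map $\rho_3$ and absorb its inverse into $K_2$ — but since $\tau$ is nonlinear this does not invert cleanly. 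The clean remedy, which is how I would actually carry it out, is to use the affine pre-layer to send the second channel into a regime where $\tau$ is approximately affine, or more simply to note that the claimed lemma only asks for two channels in the \emph{input} of $\psi_{K_1,B_1}$ and one channel in the output of $\psi_{K_2,B_2}$, with $K_1$ producing a $2$-channel intermediate; so I would instead route $\bfobs$ through the activation by choosing $\rho_1$ on that channel to land near the point $x_0$ where $\tau$ is well-behaved and then correct — however the cheapest correct approach is: have channel two of the first layer output a \emph{constant} shifting argument so that $\rho_2\circ\tau$ on it reproduces an affine function of $\bfobs$ up to error $\varepsilon/2$, exactly as the two affine maps in \cref{ass: sigma}(2) give a uniform approximation of $x\mapsto\max\{x,0\}$, and in particular of affine functions by composing. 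This is the step I expect to be the main obstacle — bookkeeping the two channels so that one approximates a ReLU and the other reproduces the identity $\bfobs$ using only the same nonlinearity $\tau$.

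To finish, I would combine the two error contributions by the triangle inequality: letting $\widehat{\mathrm{relu}} \coloneqq \rho_2\circ\tau\circ\rho_1$ with $\norm{\widehat{\mathrm{relu}}-\max\{\cdot,0\}}_{L^\infty(\mathbb{R})}\le\varepsilon/2$ from \cref{ass: sigma}(2), and $\ell(\bfobs)$ the approximation of the identity on channel two with error $\le\varepsilon/2$ on $[-M,M]$, the network output is $\widehat{\mathrm{relu}}(\bfu-\bfobs)+\ell(\bfobs)$, which differs from $\max\{\bfu-\bfobs,0\}+\bfobs=\max\{\bfu,\bfobs\}$ by at most $\varepsilon$ entrywise, hence in $L^\infty([-M,M]^{2\times n\times n})$. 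Since $(1,1)$-kernels act independently at each spatial location and each channel, the component-wise reasoning suffices and the kernel sizes stated in the lemma are exactly what the construction uses. The only quantitative care needed is that the arguments fed to $\widehat{\mathrm{relu}}$ stay bounded (here $|\bfu-\bfobs|\le 2M$), so that a single choice of $\rho_1,\rho_2$ from \cref{ass: sigma}(2) — whose approximation is uniform on all of $\mathbb{R}$ anyway — does the job without any dependence on $n$.
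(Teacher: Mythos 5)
Your proposal is correct and, notably, takes a somewhat different route from the paper's own proof at the one genuinely delicate step. Both arguments start from the decomposition $\max\{\bfu,\bfobs\}=\max\{\bfu-\bfobs,0\}+\bfobs$ and both invoke \cref{ass: sigma}(2) to realize the ReLU as $\rho_2\circ\tau\circ\rho_1$. The divergence is in how the additive term $\bfobs$ is carried through the nonlinear layer. The paper's proof splits $\bfobs$ into positive and negative parts and routes each through its own ReLU approximation, which (as written) requires a three-channel intermediate between the two $(1,1)$-convolutions. You instead shift $\bfobs$ into the region where the ReLU is the identity: feed $\rho_1(\bfobs+C)$ with $C\geq M$ into the activation, so that $\rho_2\circ\tau\circ\rho_1(\bfobs+C)\approx\max\{\bfobs+C,0\}=\bfobs+C$ uniformly on the relevant range, and absorb the $-C$ into the output bias $B_2$. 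Since a single pair $\rho_1,\rho_2$ gives a uniform approximation on all of $\mathbb{R}$, the same affine maps serve both channels, and $K_2,B_2$ can apply the linear part of $\rho_2$ to each channel, sum, and subtract $C$. This uses only a two-channel intermediate and therefore matches the kernel shapes $K_1\in\mathbb{R}^{2\times2\times1\times1}$, $K_2\in\mathbb{R}^{2\times1\times1\times1}$ stated in the lemma more directly than the paper's own three-channel sketch.

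One presentational remark: your write-up meanders at precisely this step, entertaining and discarding several fixes before landing on the shift trick. In a final version you should simply state the construction — $K_1=\begin{pmatrix}\alpha&-\alpha\\ 0&\alpha\end{pmatrix}$, $B_1=(\beta,\;\alpha C+\beta)$ for $\rho_1(x)=\alpha x+\beta$, then $\tau$ componentwise, then $K_2=(\gamma,\gamma)$, $B_2=2\delta-C$ for $\rho_2(x)=\gamma x+\delta$ — and verify the two error contributions ($\abs{\bfu-\bfobs}\leq 2M$ on channel one, $\bfobs+C\in[C-M,C+M]$ on channel two) each contribute at most $\varepsilon/2$ pointwise, so the triangle inequality gives the stated $L^\infty$ bound independent of $n$ because $(1,1)$-kernels act pixelwise.
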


\begin{proof}
    Note that $\max\{\bfu,\bfobs\} = \max\{\bfu - \bfobs, 0\} + \bfobs$ and $\bfobs = \max\{\bfobs,0\}+\max\{-\bfobs,0\}$. Therefore, an approximation of the mapping
    \begin{align*}
        \begin{pmatrix}
            \bfu\\
            \bfobs
        \end{pmatrix}\mapsto 
        \begin{pmatrix}
            \bfu-\bfobs\\
            \bfobs
        \end{pmatrix}\mapsto
        \begin{pmatrix}
            \max\{\bfu - \bfobs, 0\}\\
            \max\{\bfobs,0\}\\
            \max\{-\bfobs,0\}
        \end{pmatrix}\mapsto
        \begin{pmatrix}
            \max\{\bfu - \bfobs, 0\} + \bfobs
        \end{pmatrix}
    \end{align*}
    yields the claim. 
    The addition of channels in the first and last step of the flow can be represented by kernels of width and height $1$ with the appropriate number of input and output channels.
    According to~\cref{ass: sigma}, there exist affine linear mappings $\rho_1,\rho_2:\mathbb{R}\to\mathbb{R}$ such that the maximum of the input and $0$ can be approximated on $[-2M,2M]$.
    Accounting for $\bfu-\bfobs\in[-2M,2M]$ for $\bfu,\bfobs\in[-M,M]$ implies that applying these maps to the appropriate channels provides an approximation of the second step. Concatenating these maps shows the claim since the kernels have width and height $1$ and can therefore be concatenated to one kernel of width and height $1$.
\end{proof}

\begin{proof}[Proof of~\cref{theorem: CNN for richardson}]
    Let $\T_h$ be a uniform triangulation of $D$, where each node $i$ is in $6$ triangles $T_i^1,\dots, T_i^6$ as depicted in \cite[Figure 4]{JMLR:v24:23-0421} and let $V_h$ be the P1 finite element space over the triangulation. As in \cite[Definition 14]{JMLR:v24:23-0421}, let $\bfups(\kappa_h,k,i)\coloneqq \int_{T_i^k} \kappa_h \dx{x}$ and $\bfups(\kappa_h) \coloneqq [\bfups(\kappa_h,k,i)]_{k\in[6],i\in[\dim V_h]}\in\mathbb{R}^{6\times \dim V_h}$.
    Moreover, let $F:\mathbb{R}^{7\times \dim V_h} \to \mathbb{R}^{\dim V_h}$ be defined as in~\cite[Theorem 16]{JMLR:v24:23-0421}.
    with $F(\bfu,\bfups(\kappa_h)) = A_\kappa \bfu$. 
    We consider a CNN approximating the following steps.
    \begin{align*}
        \begin{pmatrix}
            \bfu^{(0)}\\
            \bfkappa\\
            \bff\\
            \bfobs
        \end{pmatrix}\hspace{-.5ex}\mapsto\hspace{-.5ex}
        \begin{pmatrix}
            \bfu^{(0)}\\
            \bfups(\kappa)\\
            \bff\\
            \bfobs
        \end{pmatrix}\hspace{-.5ex}\mapsto\hspace{-.5ex}
        \begin{pmatrix}
            \bfu^{(1)} \coloneqq \max\left\{\bfu^{(0)} + \omega\left[ \bff - F( \bfu^{(0)}, 
            \bfups(\kappa))\right], \bfobs\right\}\\
            \bfups(\kappa)\\
            \bff\\
            \bfobs
        \end{pmatrix}
        \hspace{-.5ex}\mapsto\hspace{-.5ex} \dots\hspace{-.5ex} 
        \mapsto\hspace{-.5ex}
        \begin{pmatrix}
            \bfu^{(m)}\\
            \bfups(\kappa)\\
            \bff\\
            \bfobs
        \end{pmatrix}\hspace{-.5ex}\mapsto\hspace{-.5ex}
        \begin{pmatrix}
            \bfu^{(m)}
        \end{pmatrix}
    \end{align*}
    According to \cite[Lemma 15(i)]{JMLR:v24:23-0421}, there exists a convolutional kernel $K_1\in\mathbb{R}^{1\times 6\times 3\times 3}$ such that $\bfkappa \ast K_1 = \bfups(\kappa_h)$. Trivially extending the kernel to unused channels and parallelizing with identity kernels yields a CNN representation of the first step.
    According to \cite[Theorem 18]{JMLR:v24:23-0421}, for any $\tilde\varepsilon,\tilde M>0$ there exists a CNN $\Psi_{\tilde \varepsilon, \tilde M}:\mathbb{R}^{7\times \dim V_h} \to \mathbb{R}^{\dim V_h}$ of size independent of $\tilde\varepsilon, \tilde M$ such that
    \begin{align*}
        \norm{ \Psi_{\tilde \varepsilon, \tilde M} - F}_{L^\infty ([-\tilde M,\tilde M]^{7\times \dim V_h})} \leq \tilde\varepsilon.
    \end{align*}
    Furthermore,~\cref{lemma: CNN max approximation} shows, that the maximum can be approximated by a CNN.
    Again extending the CNN to unaltered channels and parallelizing with an identity CNN yields approximations of every intermediate step.
    The last step is realized by the convolution with a kernel $K_2\in\mathbb{R}^{9\times 1\times 1\times 1}$, which is $1$ in the first and $0$ in all other input channels.
    
    According to \cite[Lemma 20]{JMLR:v24:23-0421} and since all steps are continuous operations, their concatenation can be approximated by the concatenation $\tilde\Psi$ of the approximating CNNs $\Psi_{\tilde\varepsilon,\tilde M}$, where different approximation accuracies $\tilde \varepsilon$ and domains $\tilde M$ are expected for the different steps. Concatenating a one layer CNN with kernel $K_1$ and $\tilde\Psi$ and another one layer CNN with kernel $K_2$ yields a CNN $\Psi$.
    Then, for each $\bfu^{(0)},\bfkappa,\bff,\bfobs\in [-M,M]^{n\times n}$ it holds that
    \begin{align*}
        \norm{\Psi(\bfu^{(0)}, \bfkappa, \bff, \bfobs) - \bfu^{(m)}}_{\infty} \leq \varepsilon.
    \end{align*}
    Since the space of FE coefficients is a finite dimensional real vector space, the $\norm{\cdot}_\infty$-norm is equivalent to the $\norm{\cdot}_{H^1}$-norm.
    Since the size of the approximating CNNs does not depend on $\tilde\varepsilon,\tilde M$, the size of the concatenated CNN also does not depend on it and the overall number of parameters only grows linearly with the number of intermediate steps $m$.
\end{proof}

\begin{proof}[Proof of~\cref{theorem: full CNN from richardson}]
    Let $\omega \coloneqq (\ub \chl)^{-1}$. Then \cref{lemma:gamma} yields $\norm{I - \omega A_\bfy}_{A_\bfy} <1-\omega$ for all $\bfy\in\Gamma$.
    For $\bfy\in\Gamma$, \eqref{equation: Richardson decay} implies for $A \coloneqq A_\bfy$ and for $\bfe^{(m)} \coloneqq \bfu^{(m)} - \bfu$ (where $\bfu$ solves \cref{problem: parametric obstacle discretized}) 
    that for $\gamma \coloneqq 1-\omega$,
    \begin{align}\label{eq: bound richardson}
        \norm{\bfe^{(m)}}_{A_\bfy} & \leq \gamma^m \norm{\bfobs - \bfu}_{A_\bfy}.
    \end{align}
    Note that the second term can be bounded by the following consideration.
    \begin{align*}
        \norm{\bfobs - \bfu}_{A_\bfy}^2&= \int_D\kappa_h(\cdot,\bfy) \scpr{\nabla (\obs_h - u_h), \nabla(\obs_h - u_h)}\dx{x}\\
        &= -\int_D\kappa_h(\cdot,\bfy) \scpr{\nabla u_h, \nabla(\obs_h - u_h)}\dx{x} + \int_D\kappa_h(\cdot,\bfy) \scpr{\nabla \obs_h, \nabla(\obs_h - u_h)}\dx{x}.
    \end{align*}
    Since $u_h$ solves \cref{problem: variational obstacle discretized}, this implies 
    \begin{align*}
        \norm{\obs_h - u_h}^2_{A_\bfy} &= \int_D \kappa(\cdot,\bfy)\scpr{\nabla (\obs_h - u_h), \nabla(\obs_h - u_h)} \dx{x}\\
        &= -\int_D \kappa(\cdot,\bfy)\scpr{\nabla u_h, \nabla(\obs_h - u_h)} \dx{x} + \int_D \kappa(\cdot,\bfy)\scpr{\nabla \obs_h, \nabla(\obs_h - u_h)} \dx{x}\\
        &\leq -\int_D f(x)(\obs_h - u_h)(x)\dx{x} +\norm{ \obs_h}_{A_\bfy} \norm{\obs_h - u_h}_{A_\bfy}\\
        &\leq  \norm{f}_{H^{-1}} \norm{ \obs_h - u_h}_{H^1} + \frac{1}{c_{H^1}} \norm{\obs_h}_{H^1} \norm{\obs_h - u_h}_{A_\bfy}\\
        &\leq C_{H^1} \norm{f}_{H^{-1}} \norm{ \obs_h - u_h}_{A_\bfy} + \frac{1}{c_{H^1}} \norm{\obs_h}_{H^1} \norm{\obs_h - u_h}_{A_\bfy},
    \end{align*}
    where the fist inequality follows with the definition of the dual norm~\eqref{eq: dual norm} and the Cauchy-Schwarz inequality. The second and last inequality follow from~\eqref{eq:norm equivalence A and H1}.
    With~\eqref{eq: A_y norm equal} the second term is bounded by 
    \begin{align}\label{eq: bound on obs-u}
        \norm{ \bfobs - \bfu}_{A_\bfy} = \norm{\obs_h - u_h}_{A_\bfy} \leq C_{H^1} \norm{f}_{H^{-1}} + \frac{1}{c_{H^1}} \norm{\obs_h}_{H^1}.
    \end{align}
    To bound $\gamma^m$, $m$ can be chosen as
    \begin{align*}
        m \geq \log \left(\nicefrac{\varepsilon}{2}\right)\log\left(\gamma\right)^{-1},
    \end{align*}
    such that~\eqref{eq: bound richardson} and~\eqref{eq: bound on obs-u} lead to 
    \begin{align*}
        \norm{\bfu^{(m)} - \bfu}_{H^1} \leq \frac{1}{c_{H^1}} \norm{\bfu^{(m)} - \bfu}_{A_\bfy} \leq \frac{\varepsilon}{2 c_{H^1}} \left( C_{H^1} \norm{f}_{H^{-1}} + \frac{1}{c_{H^1}} \norm{\obs_h}_{H^1} \right).
    \end{align*}
    According to \cref{theorem: CNN for richardson} there exists a CNN $\Psi$ such that the number of parameters grows linearly with $m$ and
    \begin{align*}
        \norm{\Psi(\bfobs, \bfkappa, \bff, \bfobs) - \bfu^{(m)}}_{H^1} \leq \frac{\varepsilon}{2 c_{H^1}}\left( C_{H^1} \norm{f}_{H^{-1}} + \frac{1}{c_{H^1}} \norm{\obs_h}_{H^1} \right).
    \end{align*}
    This yields the claim with
    \begin{align*}
        \norm{\Psi(\bfobs, \bfkappa,\bff,\bfobs) - \bfu}_{H^1} &\leq  \norm{\Psi(\bfobs, \bfkappa, \bff, \bfobs) - \bfu^{(m)}}_{H^1} + \norm{\bfu^{(m)} - \bfu}_{H^1} \\
        &\leq \frac{\varepsilon}{c_{H^1}}\left( C_{H^1} \norm{f}_{H^{-1}} + \frac{1}{c_{H^1}} \norm{\obs_h}_{H^1} \right).
    \end{align*}
\end{proof}

\begin{theorem}[CNN for monotone restriction operator]\label{theorem: CNN for monotone restriction}
    For $\ell=1,\dots,L-1$, let $V_\ell\subset V_\ell+1\subset H_0^1([0,1]^2)$ be two nested P1 finite element spaces as in \cref{section: fem}. Let the activation function $\tau$ satisfy \cref{ass: sigma} and $\Rl$ be the monotone restriction operator defined as in \cref{definition: Rl}.
    There exists a $C>0$ such that for every $\varepsilon>0$ there exists a CNN $\Psi$ and kernel $K\in\mathbb{R}^{1\times 9\times 3\times 3}$ such that
    \begin{enumerate}[label=(\roman*)]
        \item $\norm{(\Psi\circ \ast^{2\text{s}}_K)(\bfobs) - \Rl \bfobs}_{L^\infty([-M,M]^{n\times n})} \leq\varepsilon$,
        \item the number of parameters is bounded by $M(\Psi) + M(K) \leq C$.
    \end{enumerate}
\end{theorem}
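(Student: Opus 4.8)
The plan is to factor $\Rl$ into two pieces: an \emph{exact} linear ``neighbourhood extraction'' realized by the single strided convolution $\ast^{2\text{s}}_K$, followed by a componentwise maximum over a fixed number of channels, and then to approximate that maximum by a small CNN $\Psi$ assembled from the binary‑max block of \cref{lemma: CNN max approximation}. Since the extraction is exact and the max is over a bounded, $\varepsilon$‑independent number of entries, all $\varepsilon$‑dependence is absorbed into the affine parameters inside a constant number of constant‑size gates, which is what yields the uniform bound (ii).

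\emph{Step 1 (geometry of the stencil).} The triangulation is the Friedrichs--Keller triangulation of $[0,1]^2$ (each interior node lies in the six triangles of \cite[Figure 4]{JMLR:v24:23-0421}), and $\T_\ell$ is the uniform coarsening of $\T_{\ell+1}$ by a factor two with the same diagonal orientation, so that $V_\ell\subset V_{\ell+1}$. Identify the nodes of $V_{\ell+1}$ with a subset of $\ZZ^2$ and those of $V_\ell$ with the even sublattice $2\ZZ^2$ (up to the affine scaling by $h$), so a unit offset means one fine step. Both $\supp\lambda_i^{(\ell)}$ and $\supp\lambda_j^{(\ell+1)}$ are convex hexagons, hence $\supp\lambda_j^{(\ell+1)}\subset\supp\lambda_i^{(\ell)}$ holds iff the six vertices of the fine hexagon lie in the closed coarse hexagon. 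Running this check for the coarse node $i=(0,0)$ (all others follow by translation) shows that $\{\,j:\supp\lambda_j^{(\ell+1)}\subset\supp\lambda_i^{(\ell)}\,\}$ is exactly the seven fine nodes $i$, $i\pm(1,0)$, $i\pm(0,1)$, $i\pm(1,1)$ — the fine node at $i$ together with its six hexagonal neighbours — while the remaining two $3\times3$ offsets $\pm(1,-1)$ drop out, because the corresponding fine hexagons protrude across an edge of the (asymmetric) coarse hexagon. Near $\partial D$ some offsets leave the interior node set; since $V_\bullet\subset H_0^1(D)$ the associated basis functions are absent and the corresponding array entries vanish, consistently with the array/zero‑padding model used throughout. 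Thus $(\Rl\bfobs)_i=\max\{\bfobs_i,\bfobs_{i\pm(1,0)},\bfobs_{i\pm(0,1)},\bfobs_{i\pm(1,1)}\}$, a maximum of seven entries sitting inside a single $3\times3$ window of the fine grid centred at $i$.

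\emph{Step 2 (exact extraction; Step 3: approximate the maximum).} Just as the weighted restriction $P_\ell^\intercal$ is a strided $3\times3$ convolution in \cite{JMLR:v24:23-0421}, let $K$ be the $3\times3$ kernel with nine output channels whose $c$‑th slice is the indicator of one of the seven relevant offsets, the two anti‑diagonal offsets being replaced by a second and third copy of the centre offset $(0,0)$; then $\ast^{2\text{s}}_K(\bfobs)$ has, at the coarse position of node $i$, the nine channel values $\bfobs_i,\bfobs_i,\bfobs_i,\bfobs_{i\pm(1,0)},\bfobs_{i\pm(0,1)},\bfobs_{i\pm(1,1)}$, whose componentwise maximum equals $(\Rl\bfobs)_i$. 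This step is exact and $M(K)\le 81$. The maximum over nine channels is the output of a balanced binary tree of depth four built from eight binary‑maximum gates; by \cref{lemma: CNN max approximation} each gate is approximated on $[-2M,2M]$ (slightly enlarged to absorb intermediate errors) by a three‑layer CNN with $(1,1)$‑kernels of fixed size, to any accuracy $\delta>0$. Since the binary maximum is $1$‑Lipschitz in the supremum norm and the exact intermediate values stay in $[-M,M]$, composing the tree incurs total error $\le 8\delta$; taking $\delta=\varepsilon/8$ and letting $\Psi$ be this tree stacked after $\ast^{2\text{s}}_K$ gives (i). The tree has a fixed number ($8$) of fixed‑size gates and only the affine coefficients inside the gates depend on $\delta$, not their count, so $M(\Psi)\le C$ with $C$ independent of $\varepsilon,M,n$; together with $M(K)\le81$ this is (ii).

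\emph{Main obstacle.} The only genuinely nontrivial ingredient is Step 1 — establishing that the support‑containment stencil is exactly these seven fine nodes, in particular that the Friedrichs--Keller hexagon is asymmetric so that precisely two of the eight surrounding $3\times3$ offsets are excluded, and that the zero‑padding convention near $\partial D$ is compatible with \cref{definition: Rl}. Everything afterwards is routine assembly from \cref{lemma: CNN max approximation} and the convolutional constructions of \cite{JMLR:v24:23-0421}.
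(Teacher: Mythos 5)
Your construction is essentially the paper's: a single $2$-strided $3\times 3$ convolution with $9$ output channels realizes the neighbourhood extraction exactly, and the channels are then collapsed by a constant number of binary-maximum gates built from \cref{lemma: CNN max approximation}, so the parameter count is independent of $\varepsilon$, $M$, and $n$. Where you part ways with the paper is in identifying the stencil precisely. You check the support-containment condition of \cref{definition: Rl} against the Friedrichs--Keller geometry and conclude that only seven of the nine $3\times 3$ offsets qualify, because the two anti-diagonal fine hexagons protrude across the slanted edges of the coarse hexagon; you then fill the remaining two kernel channels with extra copies of the centre. The paper's proof does not make this distinction: it chooses $K$ so that the nine channels ``contain the values of the grid points in the fine grid surrounding the same grid point'' and maximizes over all nine, which on the triangular mesh produces something strictly larger than $\Rl\bfobs$ whenever one of the two excluded anti-diagonal entries happens to dominate. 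So your Step~1 actually tightens a geometric point the paper's argument glosses over. Everything else coincides: the balanced tree of depth four you describe and the paper's grouped pairing both amount to eight binary $(1,1)$-gates, and the distinction between the two trees is cosmetic.
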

\begin{proof}
    Let $K\in\mathbb{R}^{1\times9\times3\times3}$ be defined such that for each output channel $i=1,\dots,9$ one kernel entry is one and all other are zero. Applying this kernel in a two-strided convolution, a FE function on a fine grid as the input image gets mapped to nine FE functions on the coarse grid, i.e. an output tensor with nine channels. For each grid point in the coarse grid, these nine channels contain the values of the grid points in the fine grid surrounding the same grid point. By the definition of the monotone restriction, the maximum needs to be taken over all channels in each pixel. This can be achieved by applying \cref{lemma: CNN max approximation}.
    Denote the maximum of $\bfobs_1,\bfobs_2$ by $\bfobs_{1,2}$. Let $\Psi_1,\dots,\Psi_4$ be CNNs approximating the steps
    \begin{align*}
        \begin{pmatrix}
            \bfobs_1\\
            \dots\\
            \bfobs_9
        \end{pmatrix} \mapsto
        \begin{pmatrix}
            \max\{\bfobs_1,\bfobs_2\} \\
            \max\{\bfobs_3,\bfobs_4\}\\
            \dots
        \end{pmatrix}\eqqcolon
        \begin{pmatrix}
            \bfobs_{1,2}\\
            \dots\\
            \bfobs_{7,8}\\
            \bfobs_9
        \end{pmatrix}\mapsto
        \begin{pmatrix}
            \bfobs_{1,2,3,4}\\
            \bfobs_{5,6,7,8}\\
            \bfobs_9
        \end{pmatrix}\mapsto
        \begin{pmatrix}
            \bfobs_{1,2,3,4}\\
            \bfobs_{5,6,7,8,9}
        \end{pmatrix}\mapsto
        \begin{pmatrix}
            \bfobs_{1,2,3,4,5,6,7,8,9}
        \end{pmatrix}.
    \end{align*}
    Since the concatenation of functions can be approximated by the concatenation of approximate functions~\cite[Lemma 20]{JMLR:v24:23-0421} and the size of the kernels does not depend on the required accuracy, the size of the concatenated CNN does not depend on the $\varepsilon$.
    Since each step can be approximated with two kernels of height and width one and one application of the activation function, the secondly applied kernel of $\Psi_k$ and the firstly applied kernel of $\Psi_{k+1}$ can be concatenated to one kernel performing both convolutions. This leads to a CNN $\Psi$ with $4$ applications of the activation functions and $5$ kernels of height and width one. This yields the claim.
\end{proof}

\begin{proof}[Proof of \cref{theorem: CNN for V-Cycle}]
The proof can be carried out similarly to~\cite[Proof of Theorem 6]{JMLR:v24:23-0421}. Here, the obstacle has to be taken into account in every step. For each $\ell=1,\dots,L$ and $g:\mathbb{R}^{n\times \dim V_\ell} \to \mathbb{R}^{m\times\dim V_\ell}$, let the function also considering the obstacle $\obs$ be defined by $\tilde{g}:\mathbb{R}^{(n+1)\times \dim V_\ell} \to \mathbb{R}^{(m+1)\times\dim V_\ell}$ with $(\bfw,\bfobs) \mapsto (g(\bfw),\bfobs)$. Adapting the proof to use the extended functions for $f_\text{in}, f_\text{sm}^\ell, f_\text{resi}^\ell$, only a few changes in the smoothing iterations and projections have to be made in the proof~\cite[Proof of theorem 6]{JMLR:v24:23-0421} as detailed in the following.
\begin{itemize}
    \item[(ii)] \textbf{Smoothing iteration:} For $\ell=1,\dots,L$ a function $f^\ell_{\text{sm}}$ is defined as a mapping from the current discrete solution $\bfu$, the integrated coefficient $\bfups$ and the discretized right-hand side $\bff$ to an updated discrete solution. The other inputs are unaltered ${f}_{\text{sm}}^\ell:\mathbb{R}^{8\times \dim V_\ell} \to \mathbb{R}^{8\times \dim V_\ell}$,
    \begin{align*}
        f_{\text{sm}}^\ell(\bfu, \bfups, \bff) = [\bfu+\omega ( \bff - A_\bfy^\ell \bfu ), \bfups, \bff].
    \end{align*}
    In~\cite[Theorem 18]{JMLR:v24:23-0421} it is shown that this function can be approximated up to an $L^\infty$ error bounded by some $\varepsilon>0$ on a compact subset of $D$, where the number of parameters is independent of the the size of the compact set and the accuracy $\varepsilon$. To use the projected Richardson method for the smoothing iterations, this function has to be adjusted to map another input to itself. For this, define $\tilde{f}_{\text{sm}}^\ell:\mathbb{R}^{9\times \dim V_\ell} \to \mathbb{R}^{9\times \dim V_\ell}$ with 
    \begin{align*}
        \tilde{f}_{\text{sm}}^\ell(\bfu, \bfups, \bff, \bfobs) = [\bfu+\omega ( \bff - A_\bfy^\ell \bfu ), \bfups, \bff, \bfobs].
    \end{align*}
    Next, the maximum of the first and last input has to be taken. Hence, let $f_{\max}^\ell:\mathbb{R}^{9\times \dim V_\ell} \to \mathbb{R}^{9\times \dim V_\ell}$ be defined by
    \begin{align*}
        f_{\max}^\ell(\bfu, \bfups, \bff, \bfobs) = [\max \{\bfu,\bfobs\}, \bfups, \bff, \bfobs].
    \end{align*}
    Then, \cref{lemma: CNN max approximation} implies that $f_{\max}^\ell$ can be approximated up to accuracy $\varepsilon>0$ on a compact set in the $L^\infty$ norm with the number of parameters independent of the accuracy $\varepsilon$ and the size of the compact set. Concatenating the last two functions yields an update of the projected Richardson iteration \eqref{equation: projected richardson} of the form
    \begin{align*}
        (f_{\max}^\ell\circ \tilde{f}_{\text{sm}}^\ell)(\bfu, \bfups, \bff, \bfobs) = [\max\{\bfu+\omega ( \bff - A_\bfy^\ell \bfu ),\bfobs\}, \bfups, \bff, \bfobs]
    \end{align*}
    and can be approximated similarly to \cite[Lemma 20]{JMLR:v24:23-0421}.

    \item[(iii)] \textbf{Restricted residual:} The residual can be calculated as in~\cite[Proof of Theorem 6 (iii)]{JMLR:v24:23-0421} by altering the function to include the obstacle $\tilde{f}_{\text{resi}}^\ell$.
    The restriction function $f_{\text{rest}}$ is adjusted to also restrict the obstacle using the monotone restriction operator for the obstacle input. 
    For each $\bfu,\bfr,\bff, \bfobs\in\mathbb{R}^{\dim V_\ell}, \bar{\bfkappa}\in\mathbb{R}^{6\times\dim V_\ell}$, define
    \begin{align*}
        \tilde{f}_{\text{rest}}^\ell: \mathbb{R}^{10\times\dim V_\ell} \to \mathbb{R}^{8\times\dim V_{\ell}} \times \mathbb{R}^{8\times \dim V_{\ell-1}}, \begin{pmatrix}
            \bfu\\
            \bfr\\
            \bar{\bfkappa}\\
            \bff\\
            \bfobs
        \end{pmatrix} \mapsto \begin{bmatrix}
            \begin{pmatrix}
                \bfu\\
                \bar{\bfkappa}\\
                \bff\\
                \bfobs
            \end{pmatrix} \times
            \begin{pmatrix}
                \mathbf{0}\\
                \bar{\bfkappa} \ast K\\
                P^T_{\ell-1}\bfr\\
                \Rl \bfobs
            \end{pmatrix}
        \end{bmatrix},
    \end{align*}
    where $K = [K_1,\dots,K_6]\in\mathbb{R}^{6\times 6\times 3\times 3}$ is defined as in the proof mentioned above. There, it is shown that $\bfups(\kappa_h,\T^\ell,k)\ast K_k = \bfups(\kappa_h,\T^{\ell-1},k)$ for each $k=1,\dots, 6$ and except for $\Rl\bfobs$ each part can be represented by a convolutional kernel. The monotone restriction can be represented due to \cref{theorem: CNN for monotone restriction}.
\end{itemize}
As in \cite[Proof of Theorem 6]{JMLR:v24:23-0421}, the operations of one V-Cycle on level $\ell=2,\dots,L$ can then be written as the concatenation of the introduced functions
\begin{align*}
    \mathrm{VC}_{k_0,k}^\ell &\coloneqq \left(\bigcirc_{i=1}^k \left(f_{\max}^\ell\circ \tilde f_{\text{sm}}^\ell\right)\right)\circ\tilde f^\ell_{\text{prol}}\circ\left(\Id, \mathrm{VC}_{k_0,k}^{\ell-1}\right)\circ\tilde f_{\text{rest}}^\ell\circ\tilde f^\ell_{\text{resi}}\circ\left(\bigcirc_{i=1}^k (f_{\max}^\ell\circ \tilde f_{\text{sm}}^\ell)\right),\\
    \mathrm{VC}_{k_0,k}^1 &\coloneqq \bigcirc_{i=1}^{k_0} \tilde f_{\text{sm}}^1.
\end{align*}
To represent the whole algorithm, the input and output are adjusted like
\begin{align*}
    \vcmrm = \tilde f_\text{out} \circ \left(\bigcirc_{i=1}^m \mathrm{VC}_{k_0,k}^L\right)\circ \tilde f_{\text{in}}.
\end{align*}
\end{proof}

\end{document}